\documentclass{article}

\usepackage[preprint]{neurips_2026}

\usepackage[utf8]{inputenc} 
\usepackage[T1]{fontenc}    
\usepackage{hyperref}       
\usepackage{url}            
\usepackage{booktabs}       
\usepackage{amsfonts}       
\usepackage{nicefrac}       
\usepackage{microtype}      
\usepackage{xcolor}         

\usepackage{graphicx}
\usepackage{subcaption}
\usepackage{amsmath}
\usepackage{bbm}
\usepackage{makecell}
\usepackage{multirow}
\usepackage[capitalize,noabbrev]{cleveref}
\usepackage{placeins}

\usepackage{amsthm}
\theoremstyle{plain}
\newtheorem{theorem}{Theorem}[section]
\newtheorem{proposition}[theorem]{Proposition}
\newtheorem{lemma}[theorem]{Lemma}

\theoremstyle{definition}
\newtheorem{definition}[theorem]{Definition}
\newtheorem{assumption}[theorem]{Assumption}
\theoremstyle{remark}

\newcommand{\cJ}{\mathcal{J}}

\title{A Structural Theory of Position Bias in Transformers}

\author{%
  Hanna Herasimchyk \\
  University of Hamburg \\
  Hamburg, Germany \\
  \texttt{hanna.herasimchyk@uni-hamburg.de} \\
  \And
  Robin Labryga \\
  University of Hamburg \\
  Hamburg, Germany \\
  \texttt{robin.labryga@uni-hamburg.de} \\
  \And
  Tomislav Prusina \\
  University of Hamburg \\
  Hamburg, Germany \\
  \texttt{tomislav.prusina@uni-hamburg.de} \\
  \And
  Sören Laue \\
  University of Hamburg \\
  Hamburg, Germany \\
  \texttt{soeren.laue@uni-hamburg.de} \\
}

\begin{document}

\maketitle


\begin{abstract}
Transformer models systematically favor certain token positions, yet the architectural origins of this position bias remain poorly understood. This bias is closely connected to the Lost-in-the-Middle phenomenon, where models underutilize information placed in the middle of the context. We show that Lost-in-the-Middle-type behavior can arise from the architecture of causal Transformers itself. To do so, we develop a structural theory of position bias based on residual-aware cumulative attention rollout. At finite depth, causal masking and residual connections induce broad, often U-shaped, influence profiles. At infinite depth, our framework resolves a discrepancy between prior attention-only collapse theory and practical Transformer behavior: residual connections fundamentally change cumulative attention dynamics. Empirically, the predicted profiles closely match measured input-token influence in pretrained language models.
\end{abstract}


\section{Introduction}
\label{sec:introduction}

Despite their widespread success, the mechanisms by which Transformers propagate information across depth remain poorly understood.
In particular, modern Transformer models exhibit strong and systematic position biases that persist across tasks, inputs, and even lexical scrambling~\citep{rahimi2601}.
These include primacy bias (preference for early tokens), recency bias (preference for recent tokens), and the Lost-in-the-Middle (LiM) phenomenon, where information in the center of long contexts is underutilized~\citep{liu2307}.
A growing body of empirical evidence suggests that these effects are not primarily semantic, but instead reflect architectural structure.

Recent theoretical work has sought to explain position bias through attention rollout, interpreting stacked attention layers as a stochastic information propagation process across tokens.
In a recent fundamental analysis, \citet{wu2502} show that, in attention-only causal Transformers, cumulative attention must collapse onto the first token as depth increases.
This prediction contradicts empirical results in modern Transformers, which do not collapse even at substantial depth.
\citet{wu2502} identify this mismatch as an open problem.

We attribute this gap to a key omission: prior analyses ignore residual connections, despite their central role in modern Transformers.
We develop a residual-aware theory of cumulative attention rollout and formally prove that four architectural forces shape position bias at finite depth: causal masking induces primacy; residual connections induce recency; relative positional biases such as ALiBi strengthen recency; and coarse content contributions modulate both.
Their interaction yields broad, often U-shaped influence profiles, forming a structural prior consistent with Lost-in-the-Middle even without semantic content.
At infinite depth, the same framework shows that attention collapse is not inevitable but depends on the accumulation of attention mixing across layers.

Our analysis follows a progression of increasingly realistic modeling assumptions: without residual connections, cumulative attention collapses onto the first token, matching attention-only theory; adding residual connections yields broad, typically U-shaped profiles; and including coarse content contributions, modeled as a constant plus diagonal self-attention, flattens these profiles.
Comparison to empirical input-output token influence in pretrained language models shows close agreement, bridging attention-only theory and practical Transformer behavior and providing an architecture-faithful framework for understanding position bias.

\noindent\textbf{Contributions.}
\begin{itemize}\setlength{\itemsep}{0.5em}
  \item \textbf{Residual-aware attention rollout.}
  We introduce a residual-aware formulation of cumulative attention that explicitly models residual connections as persistent identity paths, yielding a propagation model aligned with modern Transformer architectures.

  \item \textbf{Architectural origin of position bias.}
  We show that Transformer architecture alone induces broad, often U-shaped influence profiles over input positions, establishing a structural prior consistent with empirically observed position-dependent effects even in the absence of content-dependent mechanisms.

  \item \textbf{Resolution of open problem \citep{wu2502}.}
  We resolve the discrepancy between attention-only theory and practical Transformer behavior by showing that residual connections fundamentally change cumulative attention dynamics. Collapse is no longer a consequence of causal masking alone, but depends on the cumulative attention-mixing strength ($\sum_t \lambda_t$).

  \item \textbf{Empirical validation against measured token influence.}
  We empirically validate our theoretical rollout model on pretrained language models by comparing its predicted influence profiles to empirically estimated input-output token influence. We find that the theory closely matches the observed distributions.
\end{itemize}


\section{Related work}
\label{sec:related_work}

\noindent\textbf{Attention rollout and theoretical position bias.}
Attention rollout and information-flow perspectives have long been used to analyze Transformer behavior \citep{abnar2005,chefer2021,barbero2406}. Most closely related to our work, \citet{wu2502} provide a rigorous analysis of attention-only causal Transformers and prove that cumulative attention must collapse onto the first token as depth increases. While elegant, this result is derived under an attention-only propagation model and does not incorporate residual connections. Empirically, such collapse is not observed in modern large-scale Transformers, motivating the need for a theory that closely reflects actual architectures.

\noindent\textbf{Empirical position bias.}
A substantial body of empirical work documents systematic position bias in Transformer attention, including primacy bias or ``attention sinks'' at early tokens \citep{zheng2306,xiao2309,yin2402,guo2410,kaul2410,cobbina2507} and recency bias favoring late tokens \citep{zhao2102,sun2109,he2302,liu2307}.
These effects are robust across inputs and tasks, persist under lexical scrambling \citep{rahimi2601}, vary across positional encodings such as RoPE \citep{su2104} and ALiBi \citep{press2108,kazemnejad2305,yang2501}, and arise even without explicit encodings \citep{guo2410}, suggesting architectural rather than semantic origins.
However, existing studies neither formalize how such biases accumulate across depth in the presence of residual connections nor explain how positional encodings modulate this accumulation.

\noindent\textbf{Lost-in-the-Middle phenomenon.}
The coexistence of primacy and recency biases gives rise to the Lost-in-the-Middle effect, where model performance degrades for information located in the center of long contexts \citep{liu2307,guo2406,menschikov2505}. Prior work characterizes this effect empirically and offers functional interpretations \citep{veseli2508,salvatore2510}.
Several approaches aim to control such biases through architectural modifications or alternative attention mechanisms by reformulating softmax \citep{kaul2410} or introducing multi-scale positional encodings \citep{zhang2403}.
While effective in practice, neither line of work provides a unified architectural account of why U-shaped profiles arise or why such interventions are necessary. Our theory offers this principled perspective and complements existing mitigations.


\section{Notation and problem setup}
\label{sec:setup}

We introduce notation and define a theory for analyzing how architectural components
of Transformer models shape cumulative cross-token information propagation across
layers, independent of training objectives, data, or optimization dynamics.
Our framework is analytically tractable yet close to practical architectures. It explicitly models cross-token mixing, residual connections, attention masking, relative positional encodings, and coarse abstractions of content-dependent attention, while abstracting away token-wise components such as MLP blocks and layer normalization.

\noindent\textbf{Tokens and layers.}
We consider a sequence of $n$ tokens indexed by $i,j\in\{1,\ldots,n\}$ and a Transformer
of depth $T$, with layers $t\in\{1,\ldots,T\}$.

\noindent\textbf{Masks.}
Attention is constrained by a binary mask $\mathcal{M}\subseteq\{1,\ldots,n\}^2$,
where $(i,j)\in\mathcal{M}$ indicates that token $i$ may attend to token $j$.
We consider causal masking ($(i,j)\in\mathcal{M}$ iff $j\le i$) and sliding-window masking
($(i,j)\in\mathcal{M}$ iff $i-w+1\le j\le i$ for fixed window size $w$).

\noindent\textbf{Attention matrices.}
At layer $t$, attention is a row-stochastic matrix $A^{(t)}\in\mathbb{R}^{n\times n}$ with $A^{(t)}_{ij}=0$ for $(i,j)\notin\mathcal{M}$.
$A^{(t)}_{ij}$ denotes the fraction of information token $i$ draws from token $j$.
We distinguish per-input attention $A^{(t)}(x)$ and its dataset average $A^{(t)} := \mathbb{E}_x\!\left[A^{(t)}(x)\right]$.
All finite-depth results concern information propagation induced by these expected kernels.

\noindent\textbf{Attention logits and content model.}
Attention weights are computed via a masked row-wise softmax
\[
A^{(t)}_{i\cdot} = \mathrm{softmax}(\ell^{(t)}_{i\cdot}), \quad \ell^{(t)}_{ij} =
\begin{cases}
s^{(t)}_{ij} + b^{(t)}_{ij} & \text{if } (i,j)\in\mathcal{M},\\
-\infty & \text{otherwise},
\end{cases}
\]
where $b^{(t)}_{ij}$ captures position-dependent biases and $s^{(t)}_{ij}$ captures content-dependent contributions (specified later), which may incorporate relative positional modulations such as RoPE.
We present single-head notation; the multi-head extension adds a head index (see \cref{app:multihead}).

\noindent\textbf{Residual connections and mixing.}
Each attention layer is embedded in a residual block $X^{(t)} \;=\; X^{(t-1)} + \mathrm{Attn}(X^{(t-1)})$, combining identity propagation with an attention-driven update.
The attention operation aggregates information across tokens according to a row-stochastic attention matrix $A^{(t)}$; at the level of cross-token information propagation, attention thus induces a token-to-token mixing pattern governed by $A^{(t)}$, while feature-space transformations (e.g., value and output projections) act independently at each token and are abstracted away in the rollout analysis.

\noindent\textbf{Residual-aware transition with adaptive mixing.}
Existing formulations account for residuals by augmenting attention with identity and renormalizing; e.g., \citet{abnar2005} replace $A^{(t)}$ with a fixed convex combination $\tfrac12 I + \tfrac12 A^{(t)}$.
However, residual networks admit an interpretation as discretizations of continuous-time dynamics (forward Euler) \citep{haber2017ode,chen2018neural}, and in numerical analysis, step sizes are not fixed a priori and may vary across steps to reflect the local update magnitude and ensure stability \citep{hairer1993solving}, motivating layer- and input-dependent attention strength.
Since our analysis concerns only the relative contribution of the identity and attention paths, we normalize the additive update by total signal strength and define the residual mixing sequence-level coefficient
\begin{equation}
\label{eq:lambda-def}
\lambda_t(x)
\;:=\;
\frac{\lVert \mathrm{Attn}(X^{(t-1)})\rVert}
{\lVert X^{(t-1)}\rVert + \lVert \mathrm{Attn}(X^{(t-1)})\rVert},
\quad \lambda_t(x) \in [0,1],
\end{equation}

with Frobenius norm $\|\cdot\|$.
We use the Frobenius norm here, since it provides a stable measure of operator energy and is commonly used in analyses of residual networks and neural ODE discretizations to quantify typical propagation strength \citep{ruthotto2020deep}.
In the theoretical analysis, we replace the input-dependent coefficient $\lambda_t(x)$
by its dataset expectation $\lambda_t := \mathbb{E}_x[\lambda_t(x)]$.

The resulting normalized update can be written as a convex combination of the identity stream and the attention stream, which induces the row-stochastic residual-aware transition matrix $R^{(t)} \;:=\; (1-\lambda_t)I + \lambda_t A^{(t)}$.
Since $\lambda_t$ depends on the magnitude of the attention update, it implicitly captures the net effect of feature-space scaling operations (e.g., MLP, layer normalization, value/output projections) on the relative strength of the attention path without modeling them explicitly.
The rollout operator thus captures typical architectural information propagation under expected attention and residual mixing.

\noindent\textbf{Cumulative rollout.}
Cumulative information propagation to depth $T$ is captured by the rollout matrix $P^{(T)} := R^{(T)}R^{(T-1)}\cdots R^{(1)}$, where $P^{(T)}_{ij}$ quantifies the total influence of input token $j$ on token $i$.
Under causal masking, the last-row distribution $p^{(T)}(j) := P^{(T)}_{nj}$ measures the cumulative influence of input token $j$ on the next output token.
This distribution is the central object: we analyze how attention masking, positional encodings, and residual connections shape $p^{(T)}$ at finite depth and whether it collapses in the infinite-depth limit.


\section{Finite-depth attention dynamics: four forces}
\label{sec:finite-depth}
To characterize systematic architectural effects at finite depth that shape the rollout distribution $p^{(T)}$, our analysis
relies on the following weak positional monotonicity assumption on the
dataset-averaged attention kernels, reflecting the ordering induced by causal masking while remaining agnostic to
content and parameter values.

\begin{assumption}[Stochastically monotone attention kernels]
\label{ass:stoch-monotone-A}
For each layer $t\in\{1,\dots,T\}$, let $A^{(t)}\in\mathbb{R}^{n\times n}$ denote the dataset-averaged attention kernel.
We assume that $A^{(t)}$ is stochastically monotone~(Definition~\ref{def:stoch-mon}) with
respect to the natural order on positions: for all $1\le i<i'\le n$ and all
prefix cutoffs $k\in\{1,\dots,n\}$,
\[
\sum_{j=1}^{k} A^{(t)}_{ij}
\;\ge\;
\sum_{j=1}^{k} A^{(t)}_{i'j}.
\]
Equivalently, for $i<i'$, the row distribution $A^{(t)}_{i,\cdot}$ is
first-order stochastically smaller than $A^{(t)}_{i',\cdot}$.
\end{assumption}

This condition is naturally compatible with a broad class of attention mechanisms
induced by causal masking and smoothly decaying or symmetric kernels.
It is especially natural when $A^{(t)}$ is interpreted as an effective
(head- and input-averaged) kernel that abstracts away content-dependent
fluctuations and captures net architectural tendencies.

Although strict prefix-monotonicity is not always satisfied exactly,
we empirically find that the effective kernels $A^{(t)}$
adhere to the condition up to numerically negligible deviations.
Across models, layers, and datasets, violations are extremely rare,
occurring for at most $10^{-6}$ of all triples $(i<i',k)$.
The mean conditional absolute prefix-mass gap is
$\mathbb{E}[\Delta \mid \Delta > 0] \approx 2 \times 10^{-6}$,
yielding an unconditional expected shortfall on the order of $10^{-12}$.
These results indicate that $A^{(t)}$ is extremely close to
prefix-monotone in practice, supporting Assumption~\ref{ass:stoch-monotone-A}
as a stable regularity condition for the finite-depth force decomposition.

Under this regularity condition, the following propositions isolate
the distinct architectural forces induced by masking, residual mixing,
positional bias, and content structure.

\noindent\textbf{Causal masking induces primacy drift.}
When composed across layers, the directional asymmetry of causal masking accumulates into a drift toward earlier positions in the rollout.

\begin{proposition}[Primacy drift under causal and sliding-window masking]
\label{prop:causal-primacy}
Assume causal or causal sliding-window masking and Assumption~\ref{ass:stoch-monotone-A}.
Let $p^{(t)}$ for $t\in\{1,\dots,T-1\}$ denote the last-row rollout distribution
$p^{(t)}(j)=P^{(t)}_{nj}$.
Then for every prefix length $k<n$,
the prefix mass is monotonically increasing:
\[
\sum_{j=1}^{k} p^{(t+1)}(j)\;\ge\;\sum_{j=1}^{k} p^{(t)}(j).
\]
Consequently, at finite depth the last-row rollout distribution exhibits a
systematic drift of mass toward earlier positions.
\end{proposition}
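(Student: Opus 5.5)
The plan is to write the one-step update of the last row explicitly and reduce the claim to a monotonicity property of the rows of $P^{(t)}$. Since $P^{(t+1)} = R^{(t+1)}P^{(t)}$, the last row satisfies
\[
p^{(t+1)}(j) \;=\; \sum_{i=1}^{n} R^{(t+1)}_{ni}\,P^{(t)}_{ij}.
\]
So $p^{(t+1)}$ is a convex combination of the rows $P^{(t)}_{i\cdot}$, with weights given by the last row of $R^{(t+1)}$. The row $P^{(t)}_{n\cdot}$ is exactly $p^{(t)}$.

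Writing $F^{(t)}_i(k) := \sum_{j\le k} P^{(t)}_{ij}$ for the prefix mass of row $i$, we get
\[
\sum_{j\le k} p^{(t+1)}(j) \;=\; \sum_i R^{(t+1)}_{ni}\,F^{(t)}_i(k).
\]
It therefore suffices to show that $F^{(t)}_i(k) \ge F^{(t)}_n(k)$ for every $i$. This holds as soon as $P^{(t)}$ is stochastically monotone in the sense of Definition~\ref{def:stoch-mon}, because then $i\mapsto F^{(t)}_i(k)$ is nonincreasing and its minimum over $i$ is attained at $i=n$. The conclusion then follows because the weights $R^{(t+1)}_{ni}$ are nonnegative and sum to one.

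The key step is to show that stochastic monotonicity is preserved along the rollout, which I would establish in three parts.
\begin{enumerate}
\item The identity $I$ is stochastically monotone, since its prefix masses are $\mathbbm{1}[i\le k]$, which is nonincreasing in $i$.
\item Convex combinations of stochastically monotone matrices are stochastically monotone, because prefix sums are linear in the matrix. Combined with Assumption~\ref{ass:stoch-monotone-A}, this makes each $R^{(s)}=(1-\lambda_s)I+\lambda_s A^{(s)}$ stochastically monotone.
\item Products of stochastically monotone row-stochastic matrices are stochastically monotone.
\end{enumerate}

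For part 3 (closure under products), write
\[
\sum_{j\le k}(AB)_{ij} \;=\; \sum_m A_{im}\,G_m(k),
\qquad G_m(k):=\sum_{j\le k}B_{mj},
\]
where $m\mapsto G_m(k)$ is nonincreasing by monotonicity of $B$. Since $A_{i\cdot}$ is first-order stochastically smaller than $A_{i'\cdot}$ for $i<i'$, the expectation of the nonincreasing function $G_\cdot(k)$ is larger under $A_{i\cdot}$. I would make this rigorous via Abel summation:
\[
\sum_m A_{im}G_m \;=\; G_n \;+\; \sum_{m<n}\Bigl(\sum_{l\le m}A_{il}\Bigr)\bigl(G_m-G_{m+1}\bigr),
\]
in which each increment $G_m-G_{m+1}$ is nonnegative and the prefix sums $\sum_{l\le m}A_{il}$ are ordered in $i$ by assumption. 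Induction from $P^{(0)}=I$, or from $P^{(1)}=R^{(1)}$, then gives monotonicity of $P^{(t)}$ for all $t$.

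The main obstacle is this product-closure step, specifically getting the Abel summation and the direction of the inequalities right. Everything else is linear bookkeeping. The masking type (causal or sliding-window) enters only through the support constraints, which keep each $A^{(t)}$ row-stochastic with nonnegative entries. The masking therefore needs no separate treatment beyond Assumption~\ref{ass:stoch-monotone-A}. Finally, the strict restriction $k<n$ only excludes the trivial case in which both sides equal $1$.
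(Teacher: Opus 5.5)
Your proof is correct and is essentially the paper's argument. The paper shows that $m^{(t)} = P^{(t)}\mathbf{1}_k$ is nonincreasing in the row index by applying, at each layer, its monotone-vector lemma ($Kv$ is decreasing whenever $K$ is stochastically monotone and $v$ is decreasing) to $m^{(t+1)} = R^{(t+1)}m^{(t)}$, which is exactly your product-closure step with $G_\cdot(k) = B\mathbf{1}_k$; it then finishes with the same convex-combination bound $m^{(t+1)}_n = \sum_i R^{(t+1)}_{ni}\, m^{(t)}_i \ge m^{(t)}_n$, and your Abel summation simply supplies an explicit proof of the test-function characterization of first-order stochastic dominance that the paper invokes without proof.
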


\noindent\textbf{Residual connections induce recency drift.}
Residual connections reduce cross-token mixing by preserving identity paths through depth.
As a result, information is increasingly retained at its current position rather
than propagated backward across the sequence, which biases finite-depth rollout
toward recent tokens.

\begin{proposition}[Residual strength induces recency drift]
\label{prop:residual-recency}
Assume causal or causal sliding-window masking and Assumption~\ref{ass:stoch-monotone-A}.
Fix attention kernels $\{A^{(t)}\}_{t=1}^T$ (interpreted as dataset-averaged
operators) and consider two residual-mixing schedules $\{\lambda_t\}$ and
$\{\lambda'_t\}$ with $\lambda_t \le \lambda'_t$ for all $t$.
Let $p^{(T)}$ and ${p'}^{(T)}$ denote the corresponding last-row rollout
distributions.
Then for every prefix length $k<n$,
\[
\sum_{j=1}^{k} p^{(T)}(j)
\;\le\;
\sum_{j=1}^{k} {p'}^{(T)}(j).
\]
Equivalently, strengthening residual connections (i.e., decreasing
$\lambda_t$) induces a recency drift in the last-row rollout distribution.
\end{proposition}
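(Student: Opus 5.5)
The plan is to read the last row $p^{(T)} = e_n^\top R^{(T)}R^{(T-1)}\cdots R^{(1)}$ as a row vector that starts at $e_n^\top$ and is multiplied on the right by $R^{(T)}$, then $R^{(T-1)}$, and so on down to $R^{(1)}$. The two schedules are then compared step by step. For a probability row vector $\mu$ write $F_\mu(k):=\sum_{j\le k}\mu_j$, and say $\mu \preceq \nu$ if $F_\mu(k)\le F_\nu(k)$ for all $k$; that is, $\nu$ puts more mass on early positions. Set $R_\lambda^{(t)}:=(1-\lambda)I+\lambda A^{(t)}$, and define the partial products
\[
q_s := e_n^\top R^{(T)}_{\lambda_T}\cdots R^{(s)}_{\lambda_s}, \qquad q'_s := e_n^\top R^{(T)}_{\lambda'_T}\cdots R^{(s)}_{\lambda'_s},
\]
so that $q_1=p^{(T)}$ and $q'_1={p'}^{(T)}$. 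The goal is $q_s\preceq q'_s$ for all $s$, proved by backward induction from $q_{T+1}=q'_{T+1}=e_n^\top$.

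Three elementary facts are needed; write $G^A_i(k):=\sum_{j\le k}A^{(t)}_{ij}$ for the prefix mass of row $i$ of $A^{(t)}$.

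\emph{(a) Larger $\lambda$ moves each row to the left.} Row $i$ of $R^{(t)}_\lambda$ has prefix mass $(1-\lambda)\mathbbm{1}[k\ge i]+\lambda\, G^A_i(k)$. Causal or sliding-window masking puts row $i$ of $A^{(t)}$ on $\{j\le i\}$, so $G^A_i(k)=1$ for $k\ge i$ and hence $G^A_i(k)\ge \mathbbm{1}[k\ge i]$ for every $k$. The prefix mass is therefore nondecreasing in $\lambda$, which gives $\mu R^{(t)}_{\lambda}\preceq \mu R^{(t)}_{\lambda'}$ for every probability vector $\mu$ whenever $\lambda\le\lambda'$.

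\emph{(b) $R^{(t)}_\lambda$ is stochastically monotone.} For $i<i'$ the prefix masses satisfy $\mathbbm{1}[k\ge i]\ge \mathbbm{1}[k\ge i']$, and $G^A_i(k)\ge G^A_{i'}(k)$ by Assumption~\ref{ass:stoch-monotone-A}. A convex combination of these two inequalities shows that the row prefix masses $G_i(k)$ of $R^{(t)}_\lambda$ are nonincreasing in $i$.

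\emph{(c) Monotone kernels preserve $\preceq$.} If $\mu\preceq\nu$ and $R$ has row prefix masses $G_i(k)$ nonincreasing in $i$, then $F_{\mu R}(k)=\sum_i \mu_i G_i(k)$. Summation by parts gives
\[
F_{\mu R}(k)=\sum_{i<n} F_\mu(i)\,\bigl(G_i(k)-G_{i+1}(k)\bigr)+G_n(k).
\]
Each coefficient $G_i(k)-G_{i+1}(k)$ is nonnegative, and $F_\mu(i)\le F_\nu(i)$ for every $i$, so $F_{\mu R}(k)\le F_{\nu R}(k)$.

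The induction step then reads
\[
q_s = q_{s+1}R^{(s)}_{\lambda_s}\;\preceq\; q'_{s+1}R^{(s)}_{\lambda_s}\;\preceq\; q'_{s+1}R^{(s)}_{\lambda'_s}=q'_s .
\]
The first relation uses the hypothesis $q_{s+1}\preceq q'_{s+1}$ together with (b) and (c); the second uses (a). At $s=1$ this is exactly the claimed prefix inequality for every $k<n$; the case $k=n$ is trivial.

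The step I expect to be most delicate is (c): the summation-by-parts identity must be set up with the correct sign convention, because $\preceq$ is reversed relative to the usual stochastic order. One must also keep the multiplication order straight, since $R^{(T)}$ acts first on $e_n^\top$. Note that (a) genuinely needs the causal support of $A^{(t)}$, while (b) needs Assumption~\ref{ass:stoch-monotone-A}; the sliding-window case only changes the support to $\{i-w+1\le j\le i\}$, which is still contained in $\{j\le i\}$, so (a) goes through unchanged.
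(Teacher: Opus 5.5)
Your proof is correct and follows the paper's route. The causal support of $A^{(t)}$ gives rowwise dominance of $R^{(t)}_{\lambda}$ over $R^{(t)}_{\lambda'}$ (your (a)), and this dominance is then carried through the layers by induction. Your version is more careful than the paper's short argument, which cites only the rowwise-dominance lemma and leaves two things implicit: the order-preservation step you make explicit in (b)--(c), which is needed because $q_{s+1}\neq q'_{s+1}$ in general and is the only place Assumption~\ref{ass:stoch-monotone-A} enters, and the product order $e_n^\top R^{(T)}\cdots R^{(1)}$, which your backward induction handles correctly.
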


\noindent\textbf{Positional encodings induce recency drift.}
When positional encodings systematically prefer recent keys, they induce a recency bias
that compounds across layers through rollout.

\begin{proposition}[Positional encodings induce recency drift]
\label{prop:pe-vs-nope}
Assume causal (or sliding-window) masking.
For each layer $t$, let
\[
A^{(t)}_0 := \mathrm{softmax}(s^{(t)}) ,
\qquad
A^{(t)}_{\mathrm{PE}} := \mathrm{softmax}(s^{(t)} + b^{(t)}),
\]
where positional logits $b^{(t)}_{ij}$ are recency-favoring in the sense that, for all admissible keys $j<k$,
\[
b^{(t)}_{ij} \le b^{(t)}_{ik}.
\]
Assume Assumption~\ref{ass:stoch-monotone-A} holds for $A^{(t)}_0$ (in the same dataset-averaged sense).

Let
\[
R^{(t)}_0 := (1-\lambda_t)I + \lambda_t A^{(t)}_0,
\qquad
R^{(t)}_{\mathrm{PE}} := (1-\lambda_t)I + \lambda_t A^{(t)}_{\mathrm{PE}},
\]
and denote the corresponding rollout distributions by
$p^{(T)}_0$ and $p^{(T)}_{\mathrm{PE}}$.

Then for every prefix length $k<n$,
\[
\sum_{j=1}^k p^{(T)}_{\mathrm{PE}}(j)
\;\le\;
\sum_{j=1}^k p^{(T)}_0(j).
\]
Equivalently, positional encodings induce a recency drift in the last-row rollout distribution.
\end{proposition}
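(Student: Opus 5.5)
We compare the two rollouts through a coupling of stochastically monotone Markov kernels. Write $F_\mu(k):=\sum_{j\le k}\mu(j)$ for the prefix mass of a distribution $\mu$ on $\{1,\dots,n\}$, and write $\mu\preceq\nu$ if $F_\mu(k)\ge F_\nu(k)$ for all $k$, i.e.\ $\mu$ is first-order stochastically smaller. The last-row rollout is $p^{(T)} = e_n^\top R^{(T)}\cdots R^{(1)}$. Reading this as a row vector propagated by right multiplication, the chain starts at position $n$ and takes one step under $R^{(T)}$, then $R^{(T-1)}$, and so on down to $R^{(1)}$. The claim is $p^{(T)}_0\preceq p^{(T)}_{\mathrm{PE}}$.

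\textbf{Step 1 (row-wise domination).} Fix a row $i$ with admissible keys $j\le i$ (or the window). Then $A^{(t)}_{\mathrm{PE},ij}\propto A^{(t)}_{0,ij}\,e^{b^{(t)}_{ij}}$. If this is meant in the dataset-averaged sense, we apply the argument per input and average, since prefix sums are linear. The weight $e^{b_{ij}}$ is nondecreasing in $j$. Reweighting a distribution by a nondecreasing likelihood ratio yields a stochastically larger distribution (monotone likelihood ratio implies first-order dominance). Explicitly, with $w_j=e^{b_{ij}}$,
\[
F_{\mathrm{PE}}(k)=\frac{\sum_{j\le k}a_jw_j}{\sum_j a_jw_j}\le \frac{\sum_{j\le k}a_j}{\sum_j a_j},
\]
because the average of $w$ over $\{j\le k\}$ is at most its average over all $j$. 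Hence $A_{0,i\cdot}\preceq A_{\mathrm{PE},i\cdot}$ row by row. Mixing with the same identity row gives the same conclusion for the transitions: $R^{(t)}_{0,i\cdot}\preceq R^{(t)}_{\mathrm{PE},i\cdot}$.

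\textbf{Step 2 (monotonicity of the base kernels).} By Assumption~\ref{ass:stoch-monotone-A}, $A^{(t)}_0$ is stochastically monotone. The identity is trivially stochastically monotone, and convex combinations of stochastically monotone kernels preserve the property, so $R^{(t)}_0$ is stochastically monotone. The key consequence is that if $\mu\preceq\nu$, then $\mu R_0\preceq \nu R_0$. To see this, note that $F_{\mu R_0}(k)=\sum_i \mu(i) g_k(i)$ with $g_k(i)=\sum_{j\le k}(R_0)_{ij}$ nonincreasing in $i$. Integrating a nonincreasing function against a stochastically smaller measure gives a larger value.

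\textbf{Step 3 (induction over layers).} Let $\mu_s$ and $\nu_s$ be the distributions after applying the top $s$ transitions of the unbiased and PE chains respectively, with $\mu_0=\nu_0=e_n$. Suppose $\mu_s\preceq\nu_s$. Then
\[
\mu_{s+1}=\mu_sR_0\preceq \nu_sR_0\preceq \nu_sR_{\mathrm{PE}}=\nu_{s+1}.
\]
The first relation is Step 2. The second holds because for any $\nu$, $\nu R_{\mathrm{PE}}$ is a $\nu$-mixture of rows that each dominate the corresponding rows of $R_0$ (Step 1), and first-order dominance is preserved under common mixtures. At $s=T$ this gives the claim.

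The main obstacle is the ordering issue in Step 3: only $R_0$ is assumed monotone, not $R_{\mathrm{PE}}$. We must therefore always apply the monotone kernel $R_0$ to the dominated pair and use row-wise domination for the swap, exactly in the order above. A secondary subtlety is the dataset-averaged interpretation of $A_{\mathrm{PE}}$. If $s^{(t)}$ is random, Step 1 should be done per input before averaging, and Assumption~\ref{ass:stoch-monotone-A} is only used for the averaged $A_0$. This is consistent because both the averaging and the prefix-sum inequalities are linear.
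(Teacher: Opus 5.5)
Your proof is correct and follows the paper's route. You establish per-input rowwise first-order dominance of the PE softmax rows, average over inputs, lift this to the residual kernels, and propagate it across layers; for the first step you use a one-shot likelihood-ratio reweighting inequality, while the paper uses the softmax covariance derivative $\partial_\theta \mathbb{E}_{\pi_\theta}[h_m] = \mathrm{Cov}_{\pi_\theta}(h_m, b_{i\cdot}) \le 0$. Your induction step $\mu_s R_0 \preceq \nu_s R_0 \preceq \nu_s R_{\mathrm{PE}}$ explicitly uses stochastic monotonicity of $R^{(t)}_0$, which makes it more careful than the paper's proof of this proposition: the paper only cites the same-$\mu$ rowwise lemma and says ``inductively repeating'' without showing where the monotonicity assumption on $A^{(t)}_0$ enters (it writes out your chain only in the proof of the diagonal-content proposition).
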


\noindent\textbf{Example (ALiBi).}
Under causal or sliding-window masking, ALiBi assigns larger biases to more recent
keys for each query position. Consequently, ALiBi is recency-favoring and satisfies
the assumptions of Proposition~\ref{prop:pe-vs-nope}, implying a recency drift in the
final rollout distribution.

\noindent\textbf{Content contributions modulate positional drift.}
Content-dependent attention can induce highly task- and token-specific patterns
that may either reinforce or counteract architectural biases.
Without additional structure, no general monotonicity or drift guarantees are
possible.

Instead, we adopt a restricted but empirically motivated abstraction in which
content scores decompose into a constant background and a diagonal self-attention
term at each layer.
In \cref{sec:experiments}, we empirically verify that this approximation
provides a reasonable fit to measured content contributions in trained Transformer
models.

\begin{proposition}[Diagonal content induces a signed recency drift]
\label{prop:diag-content}
Assume causal (or sliding-window) masking.
For each layer $t\in\{1,\dots,T\}$, suppose the content logits are
\[
s^{(t)}_{ij} \;=\; u^{(t)} + \delta^{(t)}\,\mathbbm{1}_{\{j=i\}}, \qquad (i,j)\in \mathcal{M},
\]
with constants $u^{(t)}, \delta^{(t)}\in\mathbb{R}$.
Let $A^{(t)}$ denote the dataset-averaged masked-softmax attention kernel induced
by $\ell^{(t)}_{ij}=b^{(t)}_{ij}+s^{(t)}_{ij}$, and let
$R^{(t)}=(1-\lambda_t)I+\lambda_t A^{(t)}$ with $\lambda_t\in[0,1]$.
Assume that the dataset-averaged baseline kernels
$A^{(t)}\!\big|_{\delta^{(t)}=0}$ satisfy Assumption~\ref{ass:stoch-monotone-A} (i.e., they are stochastically monotone, hence so are $R^{(t)}\!\big|_{\delta^{(t)}=0}$).

Let $P^{(T)}=R^{(T)}\cdots R^{(1)}$. Then for any output position $i$ and any $k<i$,
\[
\forall t:\ \delta^{(t)}\ge 0
\ \Rightarrow\
\sum_{j=1}^{k} P^{(T)}_{ij}\ \le\ \sum_{j=1}^{k} P^{(T)}_{ij}\Big|_{\delta^{(1:T)}\equiv 0},
\]
and
\[
\forall t:\ \delta^{(t)}\le 0
\ \Rightarrow\
\sum_{j=1}^{k} P^{(T)}_{ij}\ \ge\ \sum_{j=1}^{k} P^{(T)}_{ij}\Big|_{\delta^{(1:T)}\equiv 0}.
\]
Equivalently, positive diagonal content ($\delta^{(t)}> 0$) induces a recency drift,
whereas negative diagonal content ($\delta^{(t)}< 0$) induces a primacy drift.
\end{proposition}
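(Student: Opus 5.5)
The plan is a comparison argument for products of stochastically monotone kernels, in the same way Propositions~\ref{prop:residual-recency} and~\ref{prop:pe-vs-nope} compare two rollouts layer by layer. Write $F_M(i,k):=\sum_{j\le k}M_{ij}$ for the prefix-mass (cumulative distribution) function of row $i$ of a row-stochastic matrix $M$. Two elementary facts drive everything. \emph{(a) Propagation:} if $M$ is stochastically monotone (so $F_M(\cdot,k)$ is nonincreasing in the row index) and $\mu\preceq\nu$ in first-order stochastic order (prefix masses of $\mu$ dominate those of $\nu$, meaning $\nu$ is ``later''), then $\mu M\preceq \nu M$. This follows from $\sum_{j\le k}(\mu M)_j=\sum_i \mu_i F_M(i,k)$, i.e.\ integrating a nonincreasing function of $i$ against $\mu$ versus $\nu$. \emph{(b) Kernel comparison:} if $F_{M'}(i,k)\le F_M(i,k)$ for all $i,k$, then $\mu M'\preceq \mu M$ for every $\mu$.

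\textbf{Step 1 (single layer).} Fix $t$ and a query row $i$. The admissible keys are $\mathcal{M}_i=\{j:(i,j)\in\mathcal{M}\}$, which is $\{1,\dots,i\}$ or a window ending at $i$; the diagonal key $j=i$ is always the \emph{largest} admissible index. Adding $\delta\,\mathbbm{1}_{\{j=i\}}$ to the logits of a softmax multiplies the weight on $j=i$ by $e^{\delta}$ and renormalizes, and the constant $u^{(t)}$ cancels. So row $i$ of $A^{(t)}$ equals
\[
(1-\alpha_i)\,A^{(t),0}_{i\cdot}+\alpha_i e_i,\qquad \alpha_i=\frac{(e^{\delta}-1)a_{ii}}{1+(e^{\delta}-1)a_{ii}},
\]
where $A^{(t),0}$ is the baseline kernel and $a_{ii}=A^{(t),0}_{ii}$. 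If $\delta\ge0$ then $\alpha_i\in[0,1]$: mass is moved onto the maximal key, so every prefix sum with $k<i$ weakly decreases. For $k\ge i$ both prefix masses equal $1$. If $\delta\le0$, then $\alpha_i\le0$ and the same formula shows the prefix sums weakly increase. The same convex-combination structure passes to $R^{(t)}=(1-\lambda_t)I+\lambda_tA^{(t)}$, and to the dataset average, since each per-input kernel satisfies the pointwise row inequality and averaging preserves it. This gives $F_{R^{(t)}}\le F_{R^{(t)}_0}$ entrywise when $\delta^{(t)}\ge0$, and the reverse when $\delta^{(t)}\le0$.

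\textbf{Step 2 (induction over depth).} Consider $\delta\ge0$; the other sign is symmetric. Let $\mu_s$ be row $i$ of $R^{(s)}\cdots R^{(1)}$, computed with the $\delta$ kernels, and $\nu_s$ the same with the baseline kernels. One technical point is that $P=R^{(T)}\cdots R^{(1)}$ acts on row vectors from the right: row $i$ of $P$ is $e_iR^{(T)}R^{(T-1)}\cdots R^{(1)}$. So I will induct from the left, defining $\mu$ as the running product $e_iR^{(T)}\cdots R^{(s)}$ and $\nu$ likewise with the baseline kernels. I then need $\mu R\preceq \nu R_0$, which I split as
\[
\mu R\preceq \mu R_0\preceq \nu R_0 .
\]
The first relation is fact (b) with Step 1. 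The second is fact (a), which uses stochastic monotonicity of the \emph{baseline} $R_0$; that is exactly what is assumed, and it is why the hypothesis is stated for $\delta=0$. This choice of intermediate term avoids needing monotonicity of the perturbed kernel, which could fail.

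\textbf{Main obstacle.} The step I expect to be hardest is fact (a), because it needs care with causality: rows $i'$ of $R_0$ place mass only on $\{1,\dots,i'\}$. Stochastic monotonicity of $R_0$ follows from that of $A_0$ together with the identity part. For the identity, $F_I(i',k)=\mathbbm{1}_{\{i'\le k\}}$ is nonincreasing in $i'$, so convex combinations with $A_0$ preserve monotonicity. One must also check that the bound needs only $k<i$, not all $k$; this is harmless, since for $k\ge i$ every distribution supported on $\{1,\dots,i\}$ has prefix mass $1$. Together these give the inequality for $P^{(T)}_{i\cdot}$, with the reversed inequalities for $\delta^{(t)}\le0$.
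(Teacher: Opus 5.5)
Your argument is the paper's proof in substance. Both use a one-layer rowwise comparison, lifted to $R^{(t)}$, and then an induction through the intermediate term $\mu R_0$, so stochastic monotonicity is needed only for the \emph{baseline} kernels. Your convex-combination form $(1-\alpha_i)A^{(t),0}_{i\cdot}+\alpha_i e_i$ is equivalent to the paper's observation that $\partial_\delta F_i(m;\delta)\le 0$ because the numerator does not depend on $\delta$. Both rest on $j=i$ being the largest admissible key.

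Your left-to-right recursion $e_i^\top R^{(T)}\cdots R^{(s)}$ is more careful than the paper's $\mu_t=\mu_{t-1}R^{(t)}$. Taken literally, the paper's recursion yields row $i$ of $R^{(1)}\cdots R^{(T)}$ rather than of $P^{(T)}$. This is harmless there, since only per-layer properties are used.

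One orientation slip needs fixing.
\begin{itemize}
\item You define $\mu\preceq\nu$ to mean that the prefix masses of $\mu$ dominate those of $\nu$.
\item If $F_{M'}\le F_M$ entrywise, then $\sum_{j\le k}(\mu M')_j=\sum_{i'}\mu_{i'}F_{M'}(i',k)\le\sum_{j\le k}(\mu M)_j$.
\item So under your convention $\mu M\preceq\mu M'$, which is the reverse of fact (b).
\item Read literally, your chain $\mu R\preceq\mu R_0\preceq\nu R_0$ then says the $\delta\ge 0$ rollout carries \emph{more} prefix mass than the baseline. That is the opposite of the claim.
\end{itemize}
To fix it, keep your definition, state fact (b) as $\mu M\preceq\mu M'$, and run the induction as $\nu R_0\preceq\mu R_0\preceq\mu R$ with hypothesis $\nu\preceq\mu$. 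Alternatively, reverse the definition and leave the rest as written. Fact (a) needs no change, since order preservation does not depend on the orientation.
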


\noindent\textbf{Structural forces and Lost-in-the-Middle.}
The above propositions identify architectural forces that shape attention at
finite depth.
When combined under causal rollout, these forces yield broad, non-collapsed
influence profiles in which both early and recent tokens receive elevated
influence relative to intermediate positions.
This provides a structural account of the U-shaped influence prior associated with Lost-in-the-Middle phenomenon observed in modern Transformer models, which we empirically confirm in \cref{sec:experiments}.

\section{Infinite-depth attention dynamics}
\label{sec:infinite-depth}
We now turn from finite- to infinite-depth attention dynamics.
\citet{wu2502} show that attention-only causal rollout collapses to the first token in the infinite-depth limit. We study the corresponding residual-aware dynamics. The attention-only setting is recovered by taking $\lambda_t \equiv 1$ and omitting additive positional-bias logits. In that setting, the bounded-bias assumption below is trivially satisfied and the collapse branch of our theorem reduces to the prior attention-only collapse result. Allowing additive positional-bias logits broadens the model beyond attention-only dynamics.

\begin{assumption}[Bounded query and key operators]
\label{ass:bounded_qk}
As in \citet{wu2502}, we assume there exists a constant $C>0$ such that for all layers
$t \ge 1$,
\[
\|W_Q^{(t)}\|_2 \le C,
\qquad
\|W_K^{(t)}\|_2 \le C ,
\]
where $\|\cdot\|_2$ is the operator norm.
\end{assumption}

\begin{assumption}[Bounded value accumulation]
\label{ass:bounded_v}
We assume the sequence of value transformations is well-behaved, such that the layer-wise operator norms $C_t := \|W_V^{(t)}\|_2$ satisfy
\[
\sum_{t=1}^{\infty} \lambda_t \max(0, C_t - 1) < \infty.
\]
This is the residual-aware analogue of the bounded value-accumulation condition used by \citet{wu2502}. Because residual connections introduce skip paths that bypass contiguous prefix products, stability requires constraining the excess growth of the layer-wise transformations directly.
\end{assumption}

\begin{assumption}[Bounded positional-bias range]
\label{ass:bounded_b}
When additive positional-bias logits $b_{ij}^{(t)}$ are present, we assume that their admissible row-wise range is uniformly bounded: there exists $B < \infty$ such that for all layers $t$, query positions $i$, and admissible keys $j,k$,
\[
\bigl|b^{(t)}_{ij} - b^{(t)}_{ik}\bigr| \le B.
\]
\end{assumption}

\begin{lemma}[Uniform Stability and Attention Lower Bound]
\label{lem:bound_epsilon}
Suppose Assumptions~\ref{ass:bounded_qk}-\ref{ass:bounded_b} hold and the initial input is bounded such that $\|X^{(0)}\|_2 \le C$ under causal masking.
Then there exist constants $C_X > 0$ and $\varepsilon > 0$, both independent of the depth $t$, such that for all $t \ge 1$:
\begin{enumerate}\itemsep0pt\parskip0pt\parsep0pt
    \item[\textnormal{(i)}] The hidden states are uniformly bounded: $\max_{i} \|X_{i,:}^{(t)}\|_2 \le C_X$.
    \item[\textnormal{(ii)}] The attention weights are uniformly bounded away from zero: $A_{ij}^{(t)} \ge \varepsilon$ for all $(i, j) \in \mathcal{M}$.
\end{enumerate}
\end{lemma}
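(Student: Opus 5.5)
We prove (i) first and then deduce (ii) from it. Part (ii) follows once the logits are uniformly bounded, and the logits are controlled by the row norms of the hidden states.

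\textbf{Step 1 (uniform bound on hidden states).} We work with the residual-aware update in its normalized convex form,
\[
X^{(t)}_{i,:} = (1-\lambda_t)X^{(t-1)}_{i,:} + \lambda_t \sum_{j} A^{(t)}_{ij}\, X^{(t-1)}_{j,:} W_V^{(t)},
\]
which is the normalized residual block described in \cref{sec:setup}, with token-wise operations abstracted. Let $M_t := \max_i \|X^{(t)}_{i,:}\|_2$. The attention row is a probability vector, so the triangle inequality gives
\[
M_t \le (1-\lambda_t)M_{t-1} + \lambda_t C_t M_{t-1} = \bigl(1+\lambda_t(C_t-1)\bigr)M_{t-1}.
\]
This is at most $\bigl(1+\lambda_t\max(0,C_t-1)\bigr)M_{t-1}$. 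Iterating and using $1+x\le e^x$ yields
\[
M_t \le M_0 \exp\Bigl(\sum_{s\ge 1}\lambda_s\max(0,C_s-1)\Bigr).
\]
The sum is finite by Assumption~\ref{ass:bounded_v}. Since $M_0\le \|X^{(0)}\|_2\le C$ (each row norm is bounded by the operator norm), we may take $C_X := C\exp(\sum_s \lambda_s\max(0,C_s-1))$, which is independent of $t$. This settles (i).

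\textbf{Step 2 (bounded logits).} The content logit is $s^{(t)}_{ij} = X^{(t-1)}_{i,:}W_Q^{(t)}(X^{(t-1)}_{j,:}W_K^{(t)})^\top$, possibly with a scaling constant or a RoPE rotation; a rotation is orthogonal and does not change norms. By Cauchy--Schwarz and Assumption~\ref{ass:bounded_qk}, $|s^{(t)}_{ij}|\le C^2C_X^2$. Hence for admissible $j,k$,
\[
|\ell^{(t)}_{ij}-\ell^{(t)}_{ik}| \le 2C^2C_X^2 + B =: D,
\]
where the bias range $B$ comes from Assumption~\ref{ass:bounded_b}; we set $B=0$ when no additive bias is present.

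\textbf{Step 3 (lower bound on attention).} Under causal masking, row $i$ has exactly $i\le n$ admissible keys. Dividing numerator and denominator of the softmax by $e^{\ell^{(t)}_{ij}}$ gives
\[
A^{(t)}_{ij} = \Bigl(\sum_{k\le i} e^{\ell^{(t)}_{ik}-\ell^{(t)}_{ij}}\Bigr)^{-1} \ge \frac{1}{n e^{D}} =: \varepsilon,
\]
which is independent of $t$. The bound holds for every input $x$, so it also holds for the dataset average $A^{(t)}$. This gives (ii).

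\textbf{Main obstacle.} The delicate step is Step 1, specifically justifying which recursion the hidden states actually follow. If one used the unnormalized residual $X^{(t-1)}+\mathrm{Attn}(X^{(t-1)})$, the growth factor would be $1+C_t$ per layer, and Assumption~\ref{ass:bounded_v} would not bound the product. The argument therefore depends on taking the convex $\lambda_t$-weighted form as the model's propagation rule. I will state this explicitly and note that normalization layers, which are token-wise, would only help keep the norms bounded. A secondary issue is that $\varepsilon$ depends on $n$; this is acceptable because the lemma only requires $\varepsilon$ to be independent of depth.
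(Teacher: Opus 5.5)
Your proposal is correct and matches the paper's proof essentially step for step. For (i) it uses the same convex residual recursion $M_t \le (1+\lambda_t\max(0,C_t-1))M_{t-1}$, unrolled via $1+x\le e^x$ and Assumption~\ref{ass:bounded_v}; for (ii) it uses the same Cauchy--Schwarz bound on content logits plus the bias range $B$, giving $A^{(t)}_{ij}\ge 1/(ne^{D})$. Your additional remarks are accurate and make explicit what the paper assumes implicitly: the argument rests on adopting the normalized convex update, $\varepsilon$ depends on $n$ but not on depth, and the bound passes to the dataset-averaged kernel.
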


\begin{theorem}[Residual-aware infinite-depth collapse dichotomy]
\label{thm:residual_dichotomy}
Under causal masking and Assumptions~\ref{ass:bounded_qk}-\ref{ass:bounded_b}, the asymptotic behavior of the residual-aware attention rollout
$P^{(T)} := R^{(T)}\cdots R^{(1)}$ is fully determined by the summability of the
mixing coefficients $\{\lambda_t\}_{t\ge1}$.

\begin{enumerate}\itemsep0pt\parskip0pt\parsep0pt
\item[(i)] \textbf{Finite total mixing $\Rightarrow$ no collapse.}
If $\sum_{t=1}^\infty \lambda_t < \infty$, then for every token $i \in \{1,\dots,n\}$,
\[
\liminf_{T \to \infty} P^{(T)}_{ii}
\;\ge\;
\prod_{t=1}^{\infty} \bigl(1-(1-\varepsilon)\lambda_t\bigr)
\;>\; 0.
\]
In particular, attention does not collapse to the first token, i.e., $
\limsup_{T \to \infty} P^{(T)}_{i1} < 1.$

\item[(ii)] \textbf{Infinite total mixing $\Rightarrow$ collapse.}
If $\sum_{t=1}^\infty \lambda_t = \infty$, then for every token $i \in \{1,\dots,n\}$,
$\lim_{T \to \infty} P^{(T)}_{i1} = 1$.
Moreover, for all $1 < j \le i$ and all $T \ge 1$, there exists a constant $\varepsilon>0$ such that
\[
P^{(T)}_{ij}
\;\le\;
\exp\!\Bigl( - (j-1)\varepsilon \sum_{s=1}^{T} \lambda_s \Bigr),
\]
and consequently $P^{(T)}_{ij} \to 0$ for all $j>1$ as $T \to \infty$.
\end{enumerate}
\end{theorem}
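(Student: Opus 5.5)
The plan is to work entry-wise with the lower-triangular row-stochastic matrices $R^{(t)} = (1-\lambda_t)I + \lambda_t A^{(t)}$. The only input needed from earlier results is Lemma~\ref{lem:bound_epsilon}(ii): $A^{(t)}_{ij}\ge\varepsilon$ for all admissible $(i,j)$, uniformly in $t$. This gives a diagonal bound $A^{(t)}_{ii}\le 1-(i-1)\varepsilon\le 1-\varepsilon$ for $i\ge2$, and a lower bound on the first column, $A^{(t)}_{i1}\ge\varepsilon$. Row $1$ is trivial, since $R^{(t)}_{11}=1$ forces $P^{(T)}_{11}=1$. The two branches then follow from two elementary product estimates.

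\textbf{Part (i).} Causal masking makes every $R^{(t)}$ lower triangular. The $(i,i)$ entry of a product of lower-triangular matrices is the product of the diagonal entries, so
\[
P^{(T)}_{ii}=\prod_{t=1}^T R^{(t)}_{ii}=\prod_{t=1}^T\bigl(1-\lambda_t+\lambda_t A^{(t)}_{ii}\bigr)\ge \prod_{t=1}^T\bigl(1-(1-\varepsilon)\lambda_t\bigr).
\]
Here I only need $A^{(t)}_{ii}\ge\varepsilon$, which holds for every $i$. If $\sum_t\lambda_t<\infty$, then every factor lies in $[\varepsilon,1]$, because $\lambda_t\le1$. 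A product $\prod(1-a_t)$ with $a_t\in[0,1-\varepsilon]$ and $\sum a_t<\infty$ converges to a positive limit: use $\log(1-a)\ge -a/\varepsilon$ on this range. This gives the liminf bound. Since the rows are stochastic, $P^{(T)}_{i1}\le 1-P^{(T)}_{ii}$ for $i\ge2$, so $\limsup_T P^{(T)}_{i1}<1$.

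\textbf{Part (ii).} I will bound the tail mass by tracking the vector of suffix sums. For a stochastic lower-triangular $R$, define $S_{ij}=\sum_{l\ge j}R_{il}$. The quantity $m^{(T)}_i:=1-P^{(T)}_{i1}=\sum_{l\ge2}P^{(T)}_{il}$ satisfies the recursion
\[
m^{(T)}_i=\sum_{l\ge2}R^{(T)}_{il}m^{(T-1)}_l .
\]
This holds because $P^{(T-1)}_{11}=1$, so rows $l\ge2$ are the only contributors. The row mass of $R^{(T)}$ outside column $1$ is $1-\lambda_T A^{(T)}_{i1}\le 1-\varepsilon\lambda_T$. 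Taking $M_T:=\max_i m^{(T)}_i$, this gives $M_T\le(1-\varepsilon\lambda_T)M_{T-1}$, hence
\[
M_T\le\exp\Bigl(-\varepsilon\sum_{s\le T}\lambda_s\Bigr)\to 0.
\]
That proves collapse to the first token.

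For the sharper rate with exponent $(j-1)\varepsilon$, I apply the same argument to the suffix mass $m^{(T)}_{i,\ge j}=\sum_{l\ge j}P^{(T)}_{il}$. Restricting to rows $l\ge j$ is consistent, because $P^{(T-1)}_{l,\ge j}=0$ for $l<j$ by triangularity. Within a row $i\ge j$, the entries of $A^{(T)}$ in columns $1,\dots,j-1$ carry mass at least $(j-1)\varepsilon$. So the mass of $R^{(T)}$ on columns $\ge j$ is at most $1-(j-1)\varepsilon\lambda_T$, which yields
\[
\max_i m^{(T)}_{i,\ge j}\le\prod_{s\le T}\bigl(1-(j-1)\varepsilon\lambda_s\bigr)\le \exp\Bigl(-(j-1)\varepsilon\sum_{s\le T}\lambda_s\Bigr).
\]
Since $P^{(T)}_{ij}\le m^{(T)}_{i,\ge j}$, this gives the stated bound. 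For the factors to stay nonnegative, $\varepsilon$ may need to be shrunk to at most $1/n$; this is harmless, because the lemma only asserts existence of some $\varepsilon$.

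\textbf{Main obstacle.} The combinatorics is routine once the uniform $\varepsilon$ is available, so the real weight rests on Lemma~\ref{lem:bound_epsilon}. There, the value growth has to be controlled through the residual stream using Assumption~\ref{ass:bounded_v}; I take that lemma as given. The only delicate point left is making sure the suffix-mass recursion uses row-wise bounds that are uniform over all $i\ge j$. This is what makes the maximum contract by the same factor at every layer.
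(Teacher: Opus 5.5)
Your proposal is correct and follows the paper's proof. Part (i) is the same product-of-diagonals bound for the lower-triangular $R^{(t)}$, and your suffix-mass recursion with a row-wise maximum in part (ii) is exactly the paper's bound $\|P^{(T)}_{\ge j}\|_\infty \le \prod_{s=1}^T \|R^{(s)}_{\ge j}\|_\infty$ on the lower-right blocks, written out entrywise. Your write-up is also slightly more careful than the paper's on two points: you restrict the $\limsup$ claim to $i\ge 2$ (since $P^{(T)}_{11}=1$), and you check that the factors are nonnegative. No shrinking of $\varepsilon$ is actually needed for the latter, because $n\varepsilon\le 1$ already follows from row-stochasticity.
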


\noindent\textbf{Discussion.}
We resolve the discrepancy between attention-only theory and practical Transformer behavior by showing that residual connections fundamentally change cumulative attention dynamics. Collapse is no longer a consequence of causal masking alone, but depends on the cumulative attention-mixing strength ($\sum_t \lambda_t$).
The infinite-depth result does not require a specific positional-encoding form. It covers additive schemes such as ALiBi directly and also applies to RoPE-type mechanisms whenever the resulting admissible logit range remains uniformly bounded.

\noindent\textbf{Multi-head extension.}
We model the effective attention operator at layer $t$ as the uniform head average over the $H$ causal row-stochastic head kernels $A^{(t,h)}$, $A^{(t)} := \tfrac{1}{H}\sum_{h=1}^H A^{(t,h)}$.
Since prefix masses and residual-aware transitions depend
linearly on the underlying attention kernel, all finite- and infinite-depth
results established for a single head apply verbatim to $A^{(t)}$. \cref{app:multihead} gives a more detailed justification.


\section{Empirical validation}
\label{sec:experiments}
We empirically validate our theory by first measuring residual mixing schedules and content structures in pretrained LLMs to confirm our modeling assumptions.
Next, we perform controlled rollout computations across three variants, namely attention-only, residual-aware, and content-augmented, to isolate the architectural drivers of the predicted influence distribution $p^{(T)}$.
Finally, we compare these theoretical results against direct measurements of input-token influence in pretrained models to verify that architectural priors accurately account for observed position bias.

\noindent\textbf{Experimental setup.}
All experiments in this section are conducted on a common suite of pretrained large language models, including \texttt{BLOOM}~\citep{bigscience2205}, \texttt{MPT}~\citep{mosaicml2305}, and \texttt{Falcon}~\citep{penedo2306}, summarized in \cref{tab:llms-used}.
We report averages over 1{,}000 prompts of $n=256$ tokens sampled from the FineWeb-Edu dataset~\citep{dataset-finewebedu}.
Additional results using the DCLM-Baseline and Wikipedia datasets~\citep{dataset-dclm,dataset-wikipedia}, as well as longer sequences ($n=2048$), are provided in Appendix~\ref{app:rollout}.
The licenses for all models and datasets used in this paper are summarized in \cref{app:licenses}.
Our code \footnote{\url{https://github.com/ml-uhh/position-bias}} is publicly available.

\subsection{Measuring residual mixing and content structure in pretrained LLMs}
\label{subsec:lambda-experiments}
\noindent\textbf{Effective residual mixing $\lambda_t$.}
We measure the residual mixing coefficient $\lambda_t$ defined in \cref{eq:lambda-def}, which quantifies the relative contribution of attention-driven updates versus identity propagation at layer~$t$.
We report its dataset-average $\lambda_t := \mathbb{E}_x[\lambda_t(x)]$, with the expectation taken over samples from the evaluation dataset.
\Cref{fig:lambdas} in Appendix~\ref{app:lambda} shows estimates of $\lambda_t$ across layers for selected pretrained Transformer models, with most exhibiting a decreasing attention contribution with depth.

\noindent\textbf{Content logits structure for ALiBi models.}
\label{subsec:content-logits-structure}
We next examine the structure of the content-dependent attention logits introduced in \cref{sec:setup} and abstracted in our theoretical model (Proposition~\ref{prop:diag-content}).
We extract the content logits as the pre-softmax attention scores prior to the addition of the positional bias term and average them over prompts; corresponding heatmaps are shown in Appendix~\ref{appendix:heatmaps}.
In addition, \cref{tab:heatmaps_similarity_summary} in the appendix reports distributional similarity statistics that summarize these content logits across heads and layers.
We find that, on average, the content logits are well approximated by a constant-plus-diagonal structure: off-diagonal entries are approximately uniform, while the diagonal exhibits a consistent global shift.
This analysis is not intended to capture token-level or context-specific semantic effects, but rather to validate the coarse structural content abstraction used in our theory.
In particular, it empirically supports modeling content contributions as a constant background term plus a diagonal self-attention component, as assumed in Proposition~\ref{prop:diag-content}.

\begin{table}[t]
\small
\begin{minipage}[t]{0.39\textwidth}
\centering
\caption{Causal Transformer models used in our experiments.}
\label{tab:llms-used}
\setlength{\tabcolsep}{6pt}
\begin{tabular}{lcc}
\toprule
\makecell{\strut\\Model} & \makecell{\strut\\Parameters} & \makecell{\strut\\Depth} \\
\midrule
\texttt{falcon-rw-7b}         & 7B   & 36 \\
\texttt{mpt-7b}               & 7B   & 32\\
\texttt{mpt-30b}              & 30B & 48\\
\texttt{bloom-7b}             & 7B   & 30 \\
\texttt{bloom-176b}           & 176B & 70  \\
\bottomrule
\end{tabular}
\end{minipage}%
\hfill
\begin{minipage}[t]{0.59\textwidth}
\centering
\caption{Wasserstein distances (lower is better) between predicted and gradient-based last-row influence distributions.
}
\label{tab:wasserstein_large}
\setlength{\tabcolsep}{7pt}
\begin{tabular}{lccc}
\toprule
\makecell{\strut\\Model} &
\makecell{(a) Attn.\\only} &
\makecell{(b) Res.-\\aware} &
\makecell{(c) Res.-aware +\\const. content} \\
\midrule
\texttt{falcon-rw-7b} & 0.63 & 0.19 & \textbf{0.16} \\
\texttt{mpt-7b}   & 0.62 & \textbf{0.09} & 0.10 \\
\texttt{mpt-30b}  & 0.61 & 0.16 & \textbf{0.05} \\
\texttt{bloom-7b} & 0.63 & \textbf{0.04} & 0.18 \\
\texttt{bloom-176b} & 0.65 & 0.08 & \textbf{0.01} \\
\bottomrule
\end{tabular}
\end{minipage}
\end{table}

\subsection{Controlled rollout computation}

We compute the rollout matrix and report the final-token influence distribution $p^{(T)}$, both as defined in \cref{sec:setup}.
All rollout computations match the architectural configuration of the corresponding pretrained model, including depth, number of heads, masking pattern, and positional encoding.
Heads are aggregated uniformly to form $A^{(t)}$ as in \cref{app:multihead}.
Where applicable, we use empirically measured layer-wise residual mixing schedules $\{\lambda_t\}$.

We consider three controlled variants of the same rollout experiment (\cref{fig:rollout_comparison_bloom_176_mpt_7}) that isolate the effects of residual connections and content:
(i)~\textbf{attention-only rollout}, where residual connections are removed ($\lambda_t = 1$ for all layers) and content is excluded, reproducing the collapse to the first token predicted by prior attention-only analyses~\citep{wu2502};
(ii)~\textbf{residual-aware rollout}, where empirical $\{\lambda_t\}$ are restored without content, producing a broad U-shaped profile and showing that masking, encodings, and mixing together induce a positional prior; and
(iii)~\textbf{residual-aware rollout with constant content}, incorporating the constant-plus-diagonal structure (\cref{subsec:content-logits-structure}), which shifts mass toward later positions and reduces the primacy peak while preserving the U-shape.
Together, these variants show that residual connections prevent collapse, architectural components alone induce a U-shaped positional prior, and content contributions modulate but do not eliminate this prior.

\subsection{Measured input token influence}
\label{subsec:input-token-influence}

We estimate input token influence using a gradient-based attribution method~\citep{simonyan1312,liu2601}.
For an input sequence $x$ with embeddings $\mathbf{e}$ and predicted next token $y$, we define the influence of the $j$-th input token as
\begin{equation*}
  \hat{p}^{(T)}(j) = \mathbb{E}_x\!\left[ \left\lVert \nabla_{\mathbf{e}_j} P(y \mid x) \right\rVert_2 \right],
  \label{eq:input_token_influence}
\end{equation*}
and normalize across positions to obtain a probability distribution comparable to the controlled rollout. The expectation is taken over the samples from the evaluation dataset. \Cref{sub@fig:rollout_comparison_mpt_7_d} shows the measured
input token influence for \texttt{bloom-176b} and
\texttt{mpt-7b}. In both cases, the empirical distributions
exhibit a U-shaped profile.

\begin{figure*}[t]
  \centering
  \noindent\makebox[\textwidth][c]{\texttt{bloom-176b}}\\[0.75ex]
  \begin{subfigure}[t]{0.24\textwidth}
    \centering
    \includegraphics[width=\textwidth]{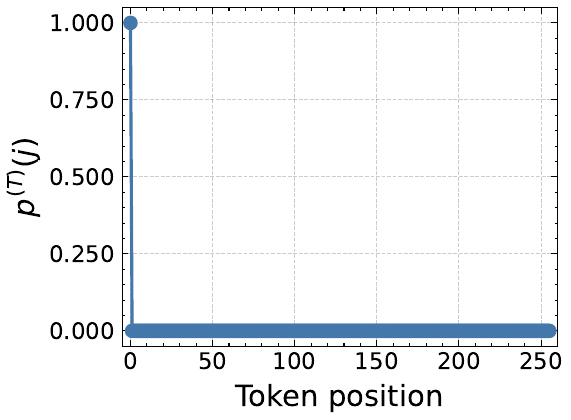}
  \end{subfigure}
  \begin{subfigure}[t]{0.24\textwidth}
    \centering
    \includegraphics[width=\textwidth]{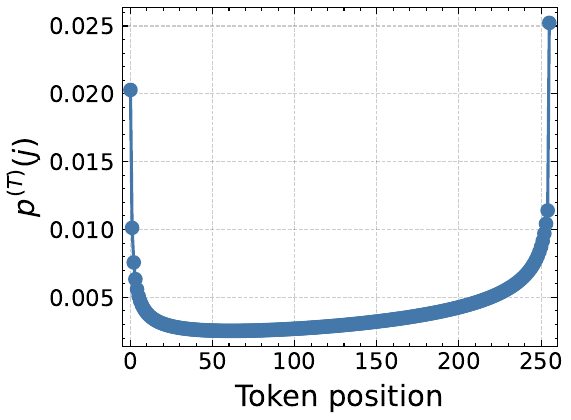}
  \end{subfigure}
  \begin{subfigure}[t]{0.24\textwidth}
    \centering
    \includegraphics[width=\textwidth]{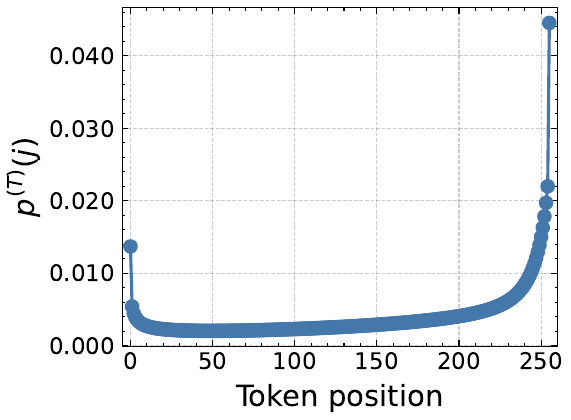}
  \end{subfigure}
  \begin{subfigure}[t]{0.24\textwidth}
    \centering
    \includegraphics[width=\textwidth]{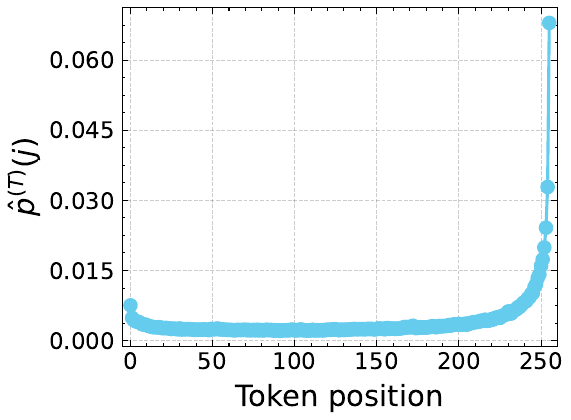}
  \end{subfigure}
  \noindent\makebox[\textwidth][c]{\texttt{mpt-7b}}\\[0.75ex]
  \begin{subfigure}[t]{0.24\textwidth}
    \centering
    \includegraphics[width=\textwidth]{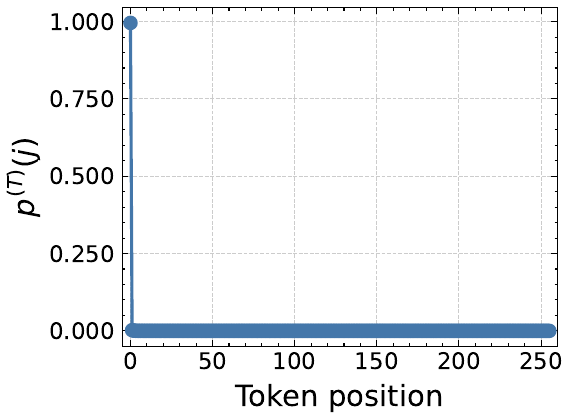}
    \caption{Attention-only rollout ($\lambda_t~=~1$)}
    \label{fig:rollout_comparison_mpt_7_a}
  \end{subfigure}
  \begin{subfigure}[t]{0.24\textwidth}
    \centering
    \includegraphics[width=\textwidth]{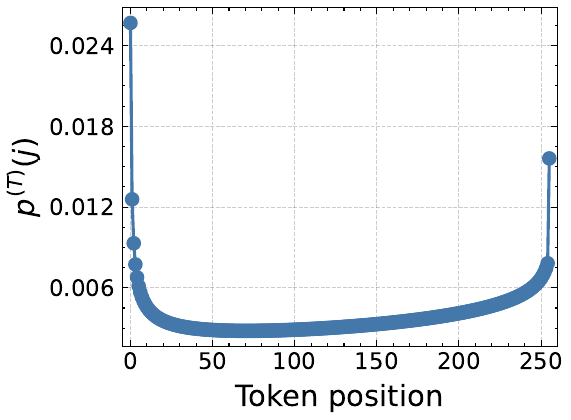}
    \caption{Residual-aware rollout}
    \label{fig:rollout_comparison_mpt_7_b}
  \end{subfigure}
  \begin{subfigure}[t]{0.24\textwidth}
    \centering
    \includegraphics[width=\textwidth]{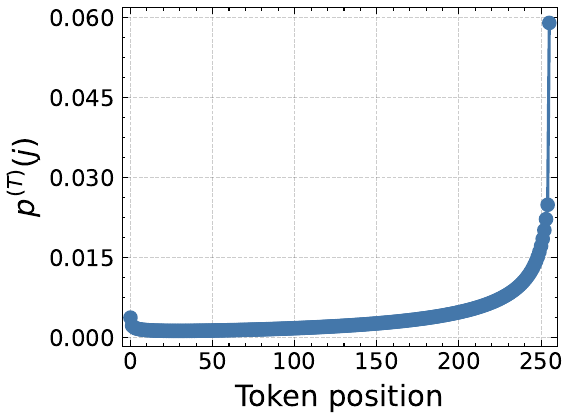}
    \caption{Residual-aware rollout with constant content}
    \label{fig:rollout_comparison_mpt_7_c}
  \end{subfigure}
  \begin{subfigure}[t]{0.24\textwidth}
    \centering
    \includegraphics[width=\textwidth]{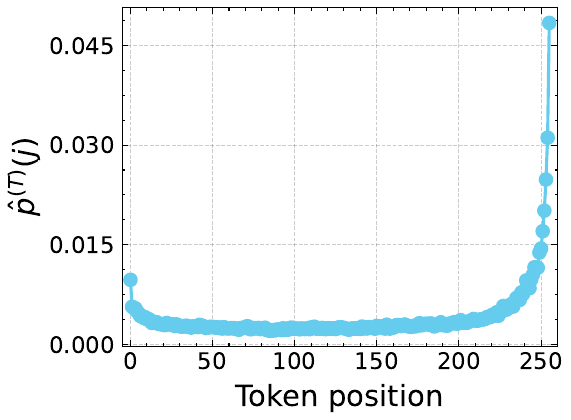}
    \caption{Measured input token influence}
    \label{fig:rollout_comparison_mpt_7_d}
  \end{subfigure}
  \caption{
Final-token influence distributions $p^{(T)}$ at sequence length $n=256$ for the 70-layer \texttt{bloom-176b} (top) and the 32-layer \texttt{mpt-7b} (bottom).
Panels (a)-(c) show controlled rollout variants: (a) attention-only rollout ($\lambda_t=1$, no content) exhibits collapse to the first token; (b) residual-aware architectural rollout with measured schedules $\{\lambda_t\}$ (no content) produces a broad U-shaped profile; and (c) residual-aware rollout with empirically measured constant-plus-diagonal content modulates the U-shape by shifting mass toward later positions.
Panel (d) shows the measured input token influence $\hat{p}^{(T)}$.
}
  \label{fig:rollout_comparison_bloom_176_mpt_7}
\end{figure*}

\noindent\textbf{Qualitative validation.}
Consistent with the controlled analysis above, the
residual-aware rollout without content (b) already
reproduces the dominant U-shaped structure observed in the
measured distribution (d). In particular, the location of
the minimum of this curve often closely coincides with that of the empirical
curve, confirming that the U-shaped positional prior arises
from architectural mechanisms alone.
Incorporating the constant-plus-diagonal content abstraction
(c) preserves this U-shape while slightly shifting mass
toward more recent positions. Visually, this modulation leads
to very close alignment between (c) and the measured
distribution (d), particularly in larger models.
By contrast, the attention-only rollout (a) collapses
to the first token and fails to reproduce the empirical profile.

\noindent\textbf{Quantitative validation.}
\Cref{tab:wasserstein_large}
reports Wasserstein distances between the rollout variants and the measured influence distribution at sequence length $n=256$, with additional per-dataset results provided in Appendix~\ref{app:rollout} (\cref{tab:wasserstein_per_dataset}).
Across models, the residual-aware rollouts without content (b) and with constant content (c) consistently attain lower Wasserstein distances than the attention-only rollout (a), especially in larger models.

Together, these results indicate that Transformer architectures induce a
U-shaped positional prior via residual connections, and that incorporating a
coarse content abstraction can further refine the geometric fit. See \cref{app:quantitative-metrics} for the corresponding analysis at $n=2048$.


\section{Discussion and limitations}

\noindent\textbf{Architectural scope.}
Our analysis isolates architectural information propagation induced by attention, masking, positional encodings, and residual connections, abstracting away from training dynamics, optimization, and token semantics.
The rollout framework focuses on token mixing structure rather than full representation dynamics.
Accordingly, our results should be interpreted as characterizing architectural priors in Transformer models, not end-to-end representation learning.

\noindent\textbf{Positional encodings.}
Our finite-depth analysis models attention logits as content-dependent terms plus additive positional biases, a class instantiated directly by ALiBi (the focus of our content-free rollouts) but only loosely approximated by the multiplicative RoPE construction, leaving sharp finite-depth probes for RoPE as an open direction.
Our infinite-depth results require only mild boundedness of additive positional logits and therefore apply to ALiBi, RoPE, and their mixtures.

\noindent\textbf{Lost-in-the-Middle.}
Our analysis characterizes a U-shaped influence prior, measured via theoretical rollout and gradient-based input-token attribution, that arises purely from architectural mechanisms.
Lost-in-the-Middle, as introduced by \citet{liu2307}, is a downstream task-performance phenomenon. While the U-shaped influence prior plausibly underlies it, closing the empirical loop on retrieval-style task degradation is left for future work.


\section{Conclusion}

We developed a residual-aware theory of cumulative information propagation in Transformer models that characterizes positional bias at practical, finite depths.
Our finite-depth analysis yields explicit, architecture-dependent characterizations that match the behavior observed in modern pretrained language models.
In controlled rollout experiments, the theory predicts broad, U-shaped influence profiles consistent with empirically observed Lost-in-the-Middle effects, demonstrating that such position bias can arise purely from architectural priors.

Beyond the finite-depth regime, we showed that incorporating residual connections fundamentally alters deep attention dynamics.
In contrast to prior attention-only analyses that predict inevitable collapse, residual connections can prevent such collapse.

\bibliographystyle{plainnat}
\bibliography{refs}


\clearpage
\appendix

\section*{Appendix overview}

The appendix is organized as follows.\\
Appendix~\ref{app:proofs} contains the complete proofs of all theoretical results.\\
Appendix~\ref{app:multihead} extends the main theoretical results to multi-head attention.\\
Appendix~\ref{app:rollout} provides additional results on cumulative rollout distributions and their quantitative analysis.\\
Appendix~\ref{app:lambda} offers further results on residual mixing schedules.\\
Appendix~\ref{appendix:heatmaps} presents qualitative content heatmaps and corresponding similarity statistics.\\
Appendix~\ref{app:licenses} summarizes license names, links, and citations for the models and datasets used in this paper.


\section{Proofs of theoretical results}
\label{app:proofs}

This appendix contains the proofs of the finite-depth propositions from \cref{sec:finite-depth} and the infinite-depth theorem stated in \cref{sec:infinite-depth}.

\subsection{Proofs of \cref{sec:finite-depth} (finite-depth case)}
\label{app:finite-depth}

First, we need to state a few definitions and lemmas that we will use in the proofs.


\noindent\textbf{Stochastic order on positions.}
Let $\mu,\nu$ be probability distributions on $\{1,\dots,n\}$.
We write
\[
\mu \succeq_{\mathrm{FOSD}} \nu\qquad\text{(first-order stochastic dominance)}
\quad\Longleftrightarrow\quad
\sum_{j=1}^{m}\mu(j)\ \le\ \sum_{j=1}^{m}\nu(j)\qquad \forall m\in\{1,\dots,n\}.
\]
Thus $\mu \succeq_{\mathrm{FOSD}} \nu$ means that $\mu$ is shifted toward \emph{larger indices} (more recent tokens).
Equivalently, for every monotonically increasing test function $\varphi:\{1,\dots,n\}\to\mathbb{R}$,
\begin{equation}
\label{eq:fods-test}
\mu \succeq_{\mathrm{FOSD}} \nu
\quad\Longleftrightarrow\quad
\mathbb{E}_{X\sim\mu}[\varphi(X)] \ \ge\ \mathbb{E}_{X\sim\nu}[\varphi(X)].
\end{equation}
(Here $\mathbb{E}_{X\sim\mu}[\varphi(X)] = \sum_{j=1}^n \mu(j)\varphi(j)$, and similarly for $\nu$.)

\noindent\textbf{Kernels and rollout.}
A row-stochastic matrix $K\in\mathbb{R}^{n\times n}$ is viewed as a Markov kernel on $\{1,\dots,n\}$.
For a distribution $\mu$ (row vector), $\mu K$ denotes the pushforward distribution.
In our setting, the layer operator is
\[
R^{(t)} \;=\; (1-\lambda_t)I + \lambda_t A^{(t)},
\]
where $A^{(t)}$ is row-stochastic (attention) and $\lambda_t\in[0,1]$.
The rollout distribution of the final token after $T$ layers is
\[
\mu_T^\top \;=\; e_n^\top P^{(T)} \;=\; e_n^\top R^{(T)}R^{(T-1)}\cdots R^{(1)}.
\]

Throughout Appendix \ref{app:finite-depth}, the attention kernels $A^{(t)}$ are understood in the
same sense as in \cref{sec:setup} and \cref{sec:finite-depth}: namely,
$A^{(t)} := \mathbb{E}_x[A^{(t)}(x)]$ denotes the dataset-averaged (expected)
attention kernel. All arguments below are deterministic and apply to these
expected kernels.

\begin{definition}[Stochastically monotone (isotone) kernel]
\label{def:stoch-mon}
A row-stochastic kernel $K$ is \emph{stochastically monotone} if its row laws are monotone in FOSD:
\[
i<i' \quad\Longrightarrow\quad K(i,\cdot)\ \preceq_{\mathrm{FOSD}}\ K(i',\cdot).
\]
Equivalently, for every monotonically increasing $\varphi$, the function
$
 g_\varphi(i):=\sum_{j=1}^n K(i,j)\varphi(j)
$
is monotonically increasing in $i$.
\end{definition}

\begin{lemma}[Order preservation under a stochastically monotone kernel]
\label{lem:order-preserve}
Let $K$ be stochastically monotone. If $\mu \succeq_{\mathrm{FOSD}} \nu$, then
\[
\mu K \succeq_{\mathrm{FOSD}} \nu K.
\]
\end{lemma}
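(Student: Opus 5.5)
The plan is to use the test-function characterization \eqref{eq:fods-test} of first-order stochastic dominance, together with the equivalent form of Definition~\ref{def:stoch-mon}, and to transfer the monotonicity of $\varphi$ through the kernel $K$.

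Fix an arbitrary monotonically increasing $\varphi:\{1,\dots,n\}\to\mathbb{R}$. By linearity, the expectation of $\varphi$ under the pushforward $\mu K$ can be written as
\[
\mathbb{E}_{X\sim\mu K}[\varphi(X)] \;=\; \sum_{j=1}^n \Bigl(\sum_{i=1}^n \mu(i)K(i,j)\Bigr)\varphi(j) \;=\; \sum_{i=1}^n \mu(i)\, g_\varphi(i),
\]
where $g_\varphi(i)=\sum_j K(i,j)\varphi(j)$. The same identity holds with $\nu$ in place of $\mu$. In both cases the sums are finite, so exchanging the order of summation needs no justification.

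Next, stochastic monotonicity of $K$ means, by the equivalent form stated in Definition~\ref{def:stoch-mon}, that $g_\varphi$ is monotonically increasing in $i$. Since $\mu \succeq_{\mathrm{FOSD}} \nu$, applying \eqref{eq:fods-test} with the increasing test function $g_\varphi$ gives $\sum_i \mu(i)g_\varphi(i)\ge \sum_i\nu(i)g_\varphi(i)$. This is exactly $\mathbb{E}_{\mu K}[\varphi]\ge \mathbb{E}_{\nu K}[\varphi]$. Because $\varphi$ was an arbitrary increasing function, the converse direction of \eqref{eq:fods-test} then yields $\mu K\succeq_{\mathrm{FOSD}}\nu K$. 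We also note that $\mu K$ and $\nu K$ are genuine probability distributions, since $K$ is row-stochastic.

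The only delicate point is the two equivalences the argument relies on: between the prefix-sum definitions and the test-function forms, both for FOSD and for stochastic monotonicity. The paper states them, but if a self-contained check is wanted, I would use Abel summation. Write an increasing $\varphi$ as $\varphi(j)=\varphi(1)+\sum_{m=1}^{j-1}(\varphi(m+1)-\varphi(m))$, which gives
\[
\mathbb{E}_\mu[\varphi]=\varphi(n)-\sum_{m=1}^{n-1}(\varphi(m+1)-\varphi(m))F_\mu(m),
\]
where $F_\mu$ is the prefix CDF. Since the increments $\varphi(m+1)-\varphi(m)$ are nonnegative and $F_\mu\le F_\nu$, this representation yields the forward implication directly. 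For the converse, take the indicator test functions $\varphi=\mathbbm{1}_{\{j>m\}}$. Alternatively, the whole lemma can be proved directly in prefix form: $\sum_{j\le k}(\mu K)(j)=\sum_i \mu(i)F_{K(i,\cdot)}(k)$, and $i\mapsto F_{K(i,\cdot)}(k)$ is decreasing, so the same Abel-summation argument gives the inequality.
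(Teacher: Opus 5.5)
Your proof is correct and follows the paper's argument exactly: you write $\mathbb{E}_{\mu K}[\varphi]$ as $\sum_i \mu(i)\, g_\varphi(i)$, use stochastic monotonicity of $K$ to see that $g_\varphi$ is increasing, and apply the test-function characterization \eqref{eq:fods-test} in both directions. Your Abel-summation check of the prefix-sum versus test-function equivalences is correct as well; it is an optional supplement, since the paper simply takes those equivalences as given.
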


\begin{proof}
Fix any monotonically increasing $\varphi$. Let $g(i):=\sum_j K(i,j)\varphi(j)$.
By Definition~\ref{def:stoch-mon}, $g$ is monotonically increasing.
Using \cref{eq:fods-test},
\[
\mathbb{E}_{X\sim\mu K}[\varphi(X)] = \sum_i \mu(i) g(i) \ \ge\ \sum_i \nu(i) g(i) = \mathbb{E}_{X\sim\nu K}[\varphi(X)].
\]
Since this holds for all monotonically increasing $\varphi$, $\mu K \succeq_{\mathrm{FOSD}} \nu K$ by \cref{eq:fods-test}.
\end{proof}

\begin{lemma}[Residual kernels inherit stochastic monotonicity]
\label{lem:residual-preserve}
If $A$ is stochastically monotone, then for any $\lambda\in[0,1]$ the residual kernel
$R_\lambda=(1-\lambda)I+\lambda A$ is also stochastically monotone.
\end{lemma}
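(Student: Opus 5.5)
We verify stochastic monotonicity of $R_\lambda$ directly through the prefix-sum characterization in Assumption~\ref{ass:stoch-monotone-A} and Definition~\ref{def:stoch-mon}. Fix $i<i'$ and a cutoff $k\in\{1,\dots,n\}$. By linearity of the prefix sum in the kernel,
\[
\sum_{j=1}^{k} (R_\lambda)_{ij} \;=\; (1-\lambda)\,\mathbbm{1}_{\{i\le k\}} \;+\; \lambda \sum_{j=1}^{k} A_{ij},
\]
and similarly for row $i'$. The claim $R_\lambda(i,\cdot)\preceq_{\mathrm{FOSD}} R_\lambda(i',\cdot)$ is exactly the inequality
$\sum_{j\le k}(R_\lambda)_{ij}\ge\sum_{j\le k}(R_\lambda)_{i'j}$ for every $k$.

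We treat the two summands separately. The identity kernel is itself stochastically monotone: its rows are point masses $\delta_i$ and $\delta_{i'}$. Since $i<i'$, we have $\mathbbm{1}_{\{i\le k\}}\ge \mathbbm{1}_{\{i'\le k\}}$ for every $k$. The attention part satisfies $\sum_{j\le k}A_{ij}\ge\sum_{j\le k}A_{i'j}$ by hypothesis. Both coefficients $1-\lambda$ and $\lambda$ are nonnegative because $\lambda\in[0,1]$, so adding the two inequalities with these weights preserves the direction. This gives the required prefix inequality for all $k$.

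Equivalently, one can argue with the test-function form. For increasing $\varphi$, the map $g^{R}_\varphi(i)=(1-\lambda)\varphi(i)+\lambda g^{A}_\varphi(i)$ is a nonnegative combination of increasing functions and is therefore increasing. Finally, $R_\lambda$ is row-stochastic as a convex combination of row-stochastic matrices, so it is a valid kernel. There is no real obstacle here. The only point needing care is that the FOSD cone is closed under nonnegative (convex) combinations of kernels, and that $I$ belongs to this cone; both follow immediately from the prefix-sum form.
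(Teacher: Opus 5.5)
Your proof is correct and is essentially the paper's argument: the paper uses your second (test-function) version, writing $g_{R_\lambda}(i)=(1-\lambda)\varphi(i)+\lambda g_A(i)$ as a convex combination of increasing functions. Your primary prefix-sum version is the same argument restricted to the increasing test functions $\varphi(j)=-\mathbbm{1}_{\{j\le k\}}$. It has the small advantage of working directly from the prefix form in Assumption~\ref{ass:stoch-monotone-A}, without relying on the equivalence between the prefix-sum and test-function characterizations of FOSD, which the paper asserts without proof.
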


\begin{proof}
Fix a monotonically increasing $\varphi$ and define $g_A(i):=\sum_j A(i,j)\varphi(j)$.
Since $A$ is stochastically monotone, $g_A$ is monotonically increasing.
For $R_\lambda$,
\[
g_{R_\lambda}(i):=\sum_j R_\lambda(i,j)\varphi(j)=(1-\lambda)\varphi(i)+\lambda g_A(i),
\]
which is monotonically increasing as a convex combination of monotonically increasing functions. Hence $R_\lambda$ is stochastically monotone.
\end{proof}

\begin{lemma}[Rowwise dominance implies dominance after one step]
\label{lem:rowwise-dominance}
Let $K,K'$ be row-stochastic kernels. Assume row-wise dominance, i.e., for all $i$,
\[
K(i,\cdot)\ \succeq_{\mathrm{FOSD}}\ K'(i,\cdot).
\]
Then for any distribution $\mu$,
\[
\mu K \succeq_{\mathrm{FOSD}} \mu K'.
\]
\end{lemma}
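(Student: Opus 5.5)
The plan is to verify the dominance through the test-function characterization \cref{eq:fods-test} by direct linear expansion. No monotonicity of $K$ or $K'$ is needed, since both kernels act on the same initial distribution $\mu$.

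First, fix an arbitrary monotonically increasing $\varphi:\{1,\dots,n\}\to\mathbb{R}$ and define
\[
g(i):=\sum_j K(i,j)\varphi(j), \qquad g'(i):=\sum_j K'(i,j)\varphi(j).
\]
The row-wise hypothesis $K(i,\cdot)\succeq_{\mathrm{FOSD}} K'(i,\cdot)$, combined with \cref{eq:fods-test} applied row by row, gives $g(i)\ge g'(i)$ for every $i$.

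Next, expand the pushforward expectations:
\[
\mathbb{E}_{X\sim\mu K}[\varphi(X)]=\sum_i\mu(i)g(i)\ \ge\ \sum_i\mu(i)g'(i)=\mathbb{E}_{X\sim\mu K'}[\varphi(X)].
\]
The inequality holds because $\mu(i)\ge 0$. Since $\varphi$ was an arbitrary increasing function, \cref{eq:fods-test} yields $\mu K\succeq_{\mathrm{FOSD}}\mu K'$.

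A second route avoids the test-function equivalence altogether. Write the prefix mass as
\[
\sum_{j\le m}(\mu K)(j)=\sum_i\mu(i)\sum_{j\le m}K(i,j).
\]
By hypothesis, $\sum_{j\le m}K(i,j)\le\sum_{j\le m}K'(i,j)$ for every $i$ and $m$. Multiplying by the nonnegative weights $\mu(i)$ and summing over $i$ gives the defining prefix inequality for every $m$.

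Neither argument has a real obstacle. The only point that needs care is that we average over a \emph{common} $\mu$, so the pointwise comparison of row laws transfers directly. I would present the prefix-sum version, since it uses only the definition and avoids relying on the equivalence \cref{eq:fods-test}.
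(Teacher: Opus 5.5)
Your proposal is correct, and your first route is exactly the paper's proof: fix an increasing $\varphi$, get $g(i)\ge g'(i)$ row by row from \cref{eq:fods-test}, and average against the common nonnegative weights $\mu(i)$. The prefix-sum version you prefer is the same argument specialized to $\varphi(j)=-\mathbf{1}_{\{j\le m\}}$. It has the small advantage of working directly from the paper's prefix-sum definition of $\succeq_{\mathrm{FOSD}}$ instead of the test-function equivalence, which the paper states without proof.
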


\begin{proof}
Fix a monotonically increasing $\varphi$ and define $g(i):=\sum_j K(i,j)\varphi(j)$ and $g'(i):=\sum_j K'(i,j)\varphi(j)$.
By the row-wise dominance assumption and \cref{eq:fods-test}, $g(i)\ge g'(i)$ for all $i$.
Therefore,
\[
\mathbb{E}_{X\sim\mu K}[\varphi(X)]=\sum_i \mu(i)g(i)\ \ge\ \sum_i\mu(i)g'(i)=\mathbb{E}_{X\sim\mu K'}[\varphi(X)].
\]
\end{proof}

\begin{lemma}[Monotone-vector propagation]
\label{lem:monotone-vector}
If $K$ is stochastically monotone and $v\in\mathbb{R}^n$ is monotonically decreasing in its index, then $Kv$ is monotonically decreasing.
\end{lemma}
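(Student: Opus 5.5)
The plan is to reduce the claim to the test-function characterization of stochastic monotonicity in Definition~\ref{def:stoch-mon}, applied to a decreasing vector via negation. Let $v$ be monotonically decreasing, i.e.\ $v_1\ge v_2\ge\cdots\ge v_n$, and set $\varphi:=-v$. Then $\varphi$ is monotonically increasing, so by Definition~\ref{def:stoch-mon} the function $g_\varphi(i)=\sum_j K(i,j)\varphi(j)=-(Kv)_i$ is monotonically increasing in $i$. Hence $(Kv)_i$ is monotonically decreasing in $i$, which is exactly the claim.

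Because the definition states the test-function form as an equivalence with the prefix-sum (FOSD) form, it may be worth giving a self-contained derivation that does not lean on that equivalence. I would use summation by parts. Write $v_j=v_n+\sum_{m=j}^{n-1}(v_m-v_{m+1})$, with each increment $d_m:=v_m-v_{m+1}\ge 0$. Since $K$ is row-stochastic, this gives
\[
(Kv)_i = v_n + \sum_{m=1}^{n-1} d_m \sum_{j=1}^{m} K(i,j).
\]
By stochastic monotonicity, each prefix mass $\sum_{j\le m}K(i,j)$ is nonincreasing in $i$. This is precisely the condition in Assumption~\ref{ass:stoch-monotone-A}, read through the FOSD definition. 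Each term is therefore a nonnegative multiple of a nonincreasing function of $i$, so $(Kv)_i$ is nonincreasing in $i$.

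The only step needing care is the index bookkeeping in the Abel summation. One must check that the coefficient of $d_m$ is the prefix sum up to $m$ rather than the tail sum, and that row-stochasticity is what absorbs the constant $v_n$. Nothing else is an obstacle; the direction conventions follow from the FOSD definition.
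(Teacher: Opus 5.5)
Your main argument is correct and is essentially the paper's proof: the paper likewise takes $\varphi=-v$ and applies the test-function characterization \cref{eq:fods-test} to the ordered rows $K(i,\cdot)\preceq_{\mathrm{FOSD}}K(i',\cdot)$, from which $(Kv)_i\ge (Kv)_{i'}$ follows. Your Abel-summation addendum is also correct: the coefficient of $d_m$ is indeed the prefix mass $\sum_{j\le m}K(i,j)$, and row-stochasticity absorbs $v_n$. It usefully proves directly the prefix-sum-to-test-function direction of the equivalence that the paper only asserts.
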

\begin{proof}
For $i<i'$, $K(i,\cdot)\preceq_{\mathrm{FOSD}} K(i',\cdot)$.
Taking $\varphi=-v$ (which is monotonically increasing since $v$ is monotonically decreasing) and using \cref{eq:fods-test} yields
$\sum_j K(i,j)(-v_j)\le \sum_j K(i',j)(-v_j)$, i.e. $(Kv)_i\ge (Kv)_{i'}$.
\end{proof}


\subsubsection{Primacy drift under causal / sliding-window masking}

\begin{proof}[Proof of Proposition~\ref{prop:causal-primacy}]
Fix $k<n$ and define $\mathbf{1}_k\in\mathbb{R}^n$ by $(\mathbf{1}_k)_j=\mathbbm{1}_{\{j\le k\}}$.
Let $m^{(t)} := P^{(t)}\mathbf{1}_k$, so that the final-token prefix mass equals
\[
\sum_{j\le k}\mu_t(j)\;=\;e_n^\top P^{(t)}\mathbf{1}_k \;=\; m_n^{(t)}.
\]
Since each $A^{(t)}$ is stochastically monotone by Assumption~\ref{ass:stoch-monotone-A} (in expectation), Lemma~\ref{lem:residual-preserve} implies each $R^{(t)}$ is also stochastically monotone.

Since $\mathbf{1}_k$ is monotonically decreasing and $m^{(t+1)}=R^{(t+1)}m^{(t)}$, Lemma~\ref{lem:monotone-vector} implies
$m^{(t)}$ is monotonically decreasing for every $t$.
Hence, $m^{(t)}_i\ge m^{(t)}_n$ for all $i$.
Using row-stochasticity,
\[
m^{(t+1)}_n = \sum_{i=1}^n R^{(t+1)}_{n i}\, m^{(t)}_i \ \ge\ \sum_{i=1}^n R^{(t+1)}_{n i}\, m^{(t)}_n = m^{(t)}_n,
\]
which proves that the final-token prefix mass $\sum_{j\le k}\mu_t(j)$ is monotonically increasing in depth.
\end{proof}

\subsubsection{Stronger residual (smaller $\lambda$) induces recency drift}
\begin{proof}[Proof of Proposition~\ref{prop:residual-recency}]
Fix a layer and write $R_\lambda=(1-\lambda)I+\lambda A$, where $A$ is row-stochastic and causal (or sliding-window),
so that $\mathrm{supp}(A(i,\cdot))\subseteq\{1,\dots,i\}$ for each $i$.
Under our convention that larger indices are more recent, for any distribution $\nu$ supported on $\{1,\dots,i\}$ we have
$\delta_i \succeq_{\mathrm{FOSD}} \nu$, where $\delta_i$ is the Kronecker delta.
Therefore, for any $0\le \lambda \le \lambda' \le 1$ and each $i$,
\begin{equation}
\label{eq:rowwise-lambda-correct}
R_\lambda(i,\cdot)=(1-\lambda)\delta_i+\lambda A(i,\cdot)
\ \succeq_{\mathrm{FOSD}}\
(1-\lambda')\delta_i+\lambda' A(i,\cdot)=R_{\lambda'}(i,\cdot).
\end{equation}

Now compare two schedules $\{\lambda_t\}$ and $\{\lambda'_t\}$ with $\lambda_t\le \lambda'_t$ for all $t$.
Let $\mu_t$ and $\mu'_t$ be the corresponding rollout distributions of the final token.
Applying Lemma~\ref{lem:rowwise-dominance} at each step with $K=R^{(t)}_{\lambda_t}$ and $K'=R^{(t)}_{\lambda'_t}$
and using \cref{eq:rowwise-lambda-correct} yields
$\mu_t \succeq_{\mathrm{FOSD}} \mu'_t$ inductively.
In particular, $\mu_T \succeq_{\mathrm{FOSD}} \mu'_T$, i.e., stronger residual (smaller $\lambda$) produces a more recent rollout law.
\end{proof}

\subsubsection{Recency-favoring positional encodings induce recency drift}
\begin{proof}[Proof of Proposition~\ref{prop:pe-vs-nope}]
Fix a layer $t$ and an input $x$ drawn from the data distribution.
Let $A^{(t)}_0(x)$ and $A^{(t)}_{\mathrm{PE}}(x)$ denote the per-input attention
kernels without and with positional encoding, respectively.
We first establish the claimed rowwise dominance for each fixed $x$, and then
take expectations over $x$.

Fix $i$ and a support interval $J_i$.
Let $\pi_\theta$ be the softmax row distribution on $J_i$ given by
\[
\pi_\theta(j) \ \propto\ \exp\!\big(s_{ij}(x)+\theta b_{ij}\big), \qquad j\in J_i,
\]
where $b_{ij}$ is \emph{monotonically increasing in $j$} (recency-favoring).
For $m\in\{1,\dots,n\}$ define the prefix indicator $h_m(j)=\mathbbm{1}_{\{j\le m\}}$,
and observe that $\mathbb{E}_{\pi_\theta}[h_m(j)]$ denotes the prefix mass under
$\pi_\theta$.

A standard softmax derivative identity gives
\[
\frac{\partial}{\partial\theta}\mathbb{E}_{\pi_\theta}[h_m(j)]
=
\mathrm{Cov}_{\pi_\theta}\!\big(h_m(j),\, b_{ij}\big).
\]
Let $H=\{j\le m\}$.
Since $b_{ij}$ is monotonically increasing in $j$, we have
$\mathbb{E}[b_{ij}\mid H]\le \mathbb{E}[b_{ij}\mid H^c]$,
hence $\mathbb{E}[b_{ij}]\ge \mathbb{E}[b_{ij}\mid H]$ and therefore
\[
\mathrm{Cov}_{\pi_\theta}(1_H,b_{ij})
=
\mathbb{P}(H)\big(\mathbb{E}[b_{ij}\mid H]-\mathbb{E}[b_{ij}]\big)
\le 0.
\]
Therefore, the prefix mass $\mathbb{E}_{\pi_\theta}[h_m(j)]$ is monotonically
decreasing in $\theta$, and thus for $\theta_2\ge\theta_1$,
\[
\pi_{\theta_2} \ \succeq_{\mathrm{FOSD}}\ \pi_{\theta_1}.
\]
Consequently, for each fixed input $x$, the attention kernel rows satisfy
\[
A^{(t)}_{\mathrm{PE}}(x)(i,\cdot)
\ \succeq_{\mathrm{FOSD}}\
A^{(t)}_{0}(x)(i,\cdot)
\qquad
\text{for all } i .
\]

Taking expectations over $x$ preserves these prefix inequalities, and hence the
dataset-averaged kernels satisfy
\[
A^{(t)}_{\mathrm{PE}}(i,\cdot)
\ \succeq_{\mathrm{FOSD}}\
A^{(t)}_{0}(i,\cdot)
\qquad
\text{for all } i .
\]

Now fix depth $T$ and compare a model with PE to the baseline without PE.
For each layer $t$, the corresponding residual kernels satisfy the same rowwise
dominance,
$R^{(t)}_{\mathrm{PE}}(i,\cdot)\succeq_{\mathrm{FOSD}}R^{(t)}_{0}(i,\cdot)$ for all $i$.
Applying Lemma~\ref{lem:rowwise-dominance} at layer $t=1$ yields
$\mu^{(\mathrm{PE})}_1\succeq_{\mathrm{FOSD}}\mu^{(0)}_1$.
Inductively repeating this argument across layers gives
$\mu^{(\mathrm{PE})}_T\succeq_{\mathrm{FOSD}}\mu^{(0)}_T$,
which proves that recency-favoring positional encodings induce a recency drift in
the rollout distribution.
\end{proof}

\subsubsection{Content contributions modulate positional drift}
\begin{proof}[Proof of Proposition~\ref{prop:diag-content}]
We write \(A_\delta\) for the (dataset-averaged) attention matrix in a given layer when the content
contribution is of the form ``constant plus diagonal'', i.e.,
\[
\ell_{ij} \;=\; s_{ij} + b_{ij} \;=\; u + \delta\,\mathbbm{1}_{\{j=i\}} + b_{ij},
\qquad
A_\delta(i,\cdot) \;=\; \mathrm{softmax}(\ell_{i\cdot}),
\]
and define the corresponding residual kernel
\(R_\delta := (1-\lambda)I+\lambda A_\delta\).
Let \(A_0,R_0\) denote the baseline \((\delta=0)\) matrices.
Fix a row \(i\) and a cutoff \(m<i\). Consider the prefix mass
\[
F_i(m;\delta) \;:=\; \sum_{j\le m} A_\delta(i,j).
\]
Since the additive constant \(u\) cancels in the softmax normalization, we may
drop it. Writing
\[
Z_i(\delta)\;:=\;\sum_{j\in \cJ_i}\exp\left(\delta\mathbbm{1}_{\{j=i\}} + b_{ij}\right)
\;=\;\exp(\delta + b_{ii})+\sum_{j\in \cJ_i\setminus\{i\}}\exp(b_{ij}),
\]
where \(\cJ_i\subseteq\{1,\dots,i\}\) is the set of admissible keys under the
(masked) attention pattern, we obtain for any \(m<i\) (so \(i\not\le m\)):
\[
F_i(m;\delta)
\;=\;
\frac{\sum_{j\in \cJ_i,\, j\le m}\exp(b_{ij})}{Z_i(\delta)}.
\]
The numerator is independent of \(\delta\), hence
\[
\frac{\partial}{\partial \delta}
F_i(m;\delta)
\;=\;
-\frac{\sum_{j\in \cJ_i,\, j\le m}\exp(b_{ij})}{Z_i(\delta)^2}\,\exp(\delta + b_{ii})
\;\le\;0.
\]
Therefore, the prefix mass \(\sum_{j\le m} A_\delta(i,j)\) is monotonically decreasing in \(\delta\) for every \(m<i\),
which is precisely the rowwise stochastic dominance
\[
\delta\ge 0 \ \Rightarrow\ A_\delta(i,\cdot)\ \succeq\ A_0(i,\cdot),
\qquad
\delta\le 0 \ \Rightarrow\ A_\delta(i,\cdot)\ \preceq\ A_0(i,\cdot),
\]
in the FOSD sense used throughout (equivalently, all prefix masses
\(\sum_{j\le m}\) for \(m<i\) decrease (increase) with \(\delta\)).
Since \(R_\delta(i,\cdot)=(1-\lambda)\delta_i+\lambda A_\delta(i,\cdot)\),
the same rowwise dominance holds for the residual kernels:
\[
\delta\ge 0 \ \Rightarrow\ R_\delta(i,\cdot)\ \succeq\ R_0(i,\cdot),
\qquad
\delta\le 0 \ \Rightarrow\ R_\delta(i,\cdot)\ \preceq\ R_0(i,\cdot).
\]

Now consider the depth-\(T\) rollout \(P^{(T)} = R^{(T)}\cdots R^{(1)}\) and
fix an output index \(i\). Let \(\mu_0=\nu_0=\delta_i\), and define recursively
\[
\mu_t := \mu_{t-1} R^{(t)}_{\delta^{(t)}},\qquad
\nu_t := \nu_{t-1} R^{(t)}_{0},
\]
so that \(\mu_T\) is the \(i\)-th row of \(P^{(T)}\) and \(\nu_T\) is the
\(i\)-th row of the baseline rollout \(P^{(T)}\!\mid_{\delta^{(1:T)}\equiv 0}\).
Assume first that \(\delta^{(t)}\ge 0\) for all \(t\).
We prove by induction that \(\mu_t\succeq \nu_t\) for all \(t\).
Indeed, if \(\mu_{t-1}\succeq \nu_{t-1}\), then by the rowwise dominance above
and Lemma~A.4,
\[
\mu_{t-1} R^{(t)}_{\delta^{(t)}} \;\succeq\; \mu_{t-1} R^{(t)}_{0},
\]
and by stochastic monotonicity of \(R^{(t)}_{0}\) together with Lemma~A.2,
\[
\mu_{t-1} R^{(t)}_{0} \;\succeq\; \nu_{t-1} R^{(t)}_{0}.
\]
Chaining yields \(\mu_t\succeq \nu_t\). Since \(\mu_0=\nu_0\), we conclude
\(\mu_T\succeq \nu_T\). Unpacking the definition of \(\succeq\) gives that for
every \(k<i\),
\[
\sum_{j=1}^k P^{(T)}_{ij} \;\le\;
\sum_{j=1}^k P^{(T)}_{ij}\Big|_{\delta^{(1:T)}\equiv 0}.
\]
If instead \(\delta^{(t)}\le 0\) for all \(t\), all inequalities reverse and we
obtain
\[
\sum_{j=1}^k P^{(T)}_{ij} \;\ge\;
\sum_{j=1}^k P^{(T)}_{ij}\Big|_{\delta^{(1:T)}\equiv 0},
\qquad \forall\,k<i.
\]
This is exactly the claimed signed drift: positive diagonal content increases
recency (reduces prefix mass), while negative diagonal content increases primacy
(increases prefix mass).
\end{proof}

\subsection{Proof of \cref{thm:residual_dichotomy} (infinite-depth case)}
\label{app:proof_residual_dichotomy}

We recall the residual-aware transition $R^{(t)} = (1-\lambda_t)I + \lambda_t A^{(t)}$ and the rollout $P^{(T)} = R^{(T)}R^{(T-1)}\cdots R^{(1)}$.

\begin{proof}[Proof of Lemma~\ref{lem:bound_epsilon}]
To prove (i), we expand the residual update $X^{(t)} = (1-\lambda_t)X^{(t-1)} + \lambda_t A^{(t)} X^{(t-1)} W_V^{(t)}$.
Using the triangle inequality and submultiplicativity of the operator norm, and noting that the row-stochastic attention matrix $A^{(t)}$ does not increase the maximum row norm (i.e., its induced $\infty$-norm on the rows is 1):
\[
\max_i \|X_{i,:}^{(t)}\|_2 \le (1-\lambda_t) \max_i \|X_{i,:}^{(t-1)}\|_2 + \lambda_t \max_i \|X_{i,:}^{(t-1)}\|_2 \|W_V^{(t)}\|_2.
\]
Letting $M^{(t)} = \max_i \|X_{i,:}^{(t)}\|_2$ and $C_t = \|W_V^{(t)}\|_2$, we have:
\[
M^{(t)} \le M^{(t-1)} \bigl((1-\lambda_t) + \lambda_t C_t\bigr) = M^{(t-1)} \bigl(1 + \lambda_t (C_t - 1)\bigr) \le M^{(t-1)} \bigl(1 + \lambda_t \max(0, C_t - 1)\bigr).
\]
Unrolling this recursion through $T$ gives:
\[
M^{(T)} \le M^{(0)} \prod_{s=1}^{T} \bigl(1 + \lambda_s \max(0, C_s - 1)\bigr) \le M^{(0)} \exp\left( \sum_{s=1}^{T} \lambda_s \max(0, C_s - 1) \right).
\]
By Assumption~\ref{ass:bounded_v}, the sum in the exponent converges to some finite value. Let $K = \exp\bigl(\sum_{s=1}^{\infty} \lambda_s \max(0, C_s - 1)\bigr) < \infty$. Since the initial input is bounded by $C$, we have $M^{(T)} \le C \cdot K$. Setting $C_X = C \cdot K$ establishes the uniform bound on the trajectories.

For (ii), we utilize the result of part (i). The attention logits are $\ell^{(t)}_{ij} = s^{(t)}_{ij} + b^{(t)}_{ij}$, where $s^{(t)}_{ij} = (X_{i,:}^{(t-1)} W_Q^{(t)})(X_{j,:}^{(t-1)} W_K^{(t)})^\top$.
By Cauchy-Schwarz and Assumption~\ref{ass:bounded_qk}, $|s^{(t)}_{ij}| \le (C_X C)^2$.
Thus, for any admissible query $i$ and keys $j, k$, the content logit range is uniformly bounded: $|s^{(t)}_{ij} - s^{(t)}_{ik}| \le 2(C_X C)^2$.
Combined with Assumption~\ref{ass:bounded_b}, the full admissible logit range is bounded:
\[
|\ell^{(t)}_{ij} - \ell^{(t)}_{ik}| \le |s^{(t)}_{ij} - s^{(t)}_{ik}| + |b^{(t)}_{ij} - b^{(t)}_{ik}| \le 2(C_X C)^2 + B =: M.
\]
Since softmax is invariant to row-wise additive constants, we can lower bound any admissible entry $A_{ij}^{(t)}$ by writing it as $( \sum_{k} \exp(\ell_{ik}^{(t)} - \ell_{ij}^{(t)}) )^{-1}$.
Because $\ell_{ik}^{(t)} - \ell_{ij}^{(t)} \le M$, we obtain $A_{ij}^{(t)} \ge \frac{1}{n e^M} =: \varepsilon > 0$, which is strictly positive for any fixed sequence length $n$.
\end{proof}

\begin{proof}[Proof of \cref{thm:residual_dichotomy}(i)]
Assume $\sum_{t=1}^{\infty} \lambda_t < \infty$. We first lower bound the diagonal entries of $R^{(t)}$ using Lemma~\ref{lem:bound_epsilon}:
\[
R^{(t)}_{ii} = (1-\lambda_t) + \lambda_t A^{(t)}_{ii} \ge 1 - \lambda_t + \lambda_t \varepsilon = 1 - (1-\varepsilon)\lambda_t.
\]
Since each $A^{(t)}$ (and thus $R^{(t)}$) is lower-triangular due to causal masking, the diagonal entries of their product are the products of their diagonal entries:
\[
P^{(T)}_{ii} = (R^{(T)} \cdots R^{(1)})_{ii} = \prod_{t=1}^{T} R^{(t)}_{ii} \ge \prod_{t=1}^{T} \bigl(1-(1-\varepsilon)\lambda_t\bigr).
\]
Since $\sum \lambda_t < \infty$, the infinite product $\prod_{t=1}^{\infty} (1 - (1-\varepsilon)\lambda_t)$ converges to a strictly positive constant. Thus, $\liminf_{T \to \infty} P^{(T)}_{ii} > 0$, and the attention mass cannot collapse entirely to the first token.
\end{proof}

\begin{proof}[Proof of \cref{thm:residual_dichotomy}(ii)]
Assume $\sum_{t=1}^{\infty} \lambda_t = \infty$. For any token $i$, we track the total weight $W_{i,>1}^{(T)}$ assigned to all positions excluding the first token. Since each attention matrix $A^{(t)}$ must assign at least $\varepsilon$ mass to the first token, it reduces the remaining weight by a factor of at least $(1-\varepsilon)$.

The residual $R^{(t)} = (1-\lambda_t)I + \lambda_t A^{(t)}$ acts as a weighted average between maintaining the current weight distribution and applying this reduction. Consequently, the weight assigned to positions $j > 1$ shrinks at each layer by a factor of $(1 - \varepsilon \lambda_t)$. Concretely, using $A_{jj}^{(t)} \le 1 - (j-1)\varepsilon$ from Lemma~\ref{lem:bound_epsilon}, the diagonal entries of the single-layer transition satisfy:
\[
R_{jj}^{(t)} = (1-\lambda_t) + \lambda_t A_{jj}^{(t)} \le 1 - \lambda_t + \lambda_t(1 - (j-1)\varepsilon) = 1 - (j-1)\varepsilon \lambda_t.
\]
Compounding these contractions through the rollout $P^{(T)} = R^{(T)} \cdots R^{(1)}$, and noting that these are lower-triangular matrices so the diagonal entries of the product are the products of the diagonal entries, we obtain:
\[
P_{jj}^{(T)} = \prod_{s=1}^T R_{jj}^{(s)} \le \prod_{s=1}^T \bigl(1 - (j-1)\varepsilon \lambda_s\bigr) \le \exp\left( -(j-1)\varepsilon \sum_{s=1}^T \lambda_s \right).
\]
For $j > 1$, the divergence of $\sum \lambda_s$ implies $P_{jj}^{(T)} \to 0$ as $T \to \infty$. 

To establish the bound for all $j \le i \le n$, we observe that for any $j \in \{2, \dots, n\}$, the lower-right submatrix spanning indices $\{j, \dots, n\}$ is closed under multiplication. Let $R^{(s)}_{\ge j}$ denote this submatrix. Since $R^{(s)}$ is row-stochastic, for any row $i \ge j$, the sum of its entries in the submatrix is
\[
\sum_{k=j}^i R^{(s)}_{ik} = 1 - \sum_{k=1}^{j-1} R^{(s)}_{ik} = 1 - \lambda_s \sum_{k=1}^{j-1} A^{(s)}_{ik}.
\]
Using Lemma~\ref{lem:bound_epsilon}(ii), we have $A^{(s)}_{ik} \ge \varepsilon$, so $\sum_{k=1}^{j-1} A^{(s)}_{ik} \ge (j-1)\varepsilon$. Thus,
\[
\|R^{(s)}_{\ge j}\|_\infty = \max_{i \ge j} \sum_{k=j}^i R^{(s)}_{ik} \le 1 - (j-1)\varepsilon \lambda_s.
\]
By submultiplicativity of the induced $\infty$-norm, the rollout submatrix $P^{(T)}_{\ge j} = R^{(T)}_{\ge j} \cdots R^{(1)}_{\ge j}$ satisfies
\[
\|P^{(T)}_{\ge j}\|_\infty \le \prod_{s=1}^T \|R^{(s)}_{\ge j}\|_\infty \le \prod_{s=1}^T \bigl(1 - (j-1)\varepsilon \lambda_s\bigr) \le \exp\left( -(j-1)\varepsilon \sum_{s=1}^T \lambda_s \right).
\]
Since $P^{(T)}_{ij}$ for $i \ge j$ is an entry in this submatrix, it is bounded by the same value, which proves the exponential decay for all $j > 1$.

As $T \to \infty$, the divergence of $\sum \lambda_t$ forces this upper bound to zero. This implies that the attention mass for all tokens $j > 1$ vanishes, leaving the first token to absorb the entire unit of row mass: $\lim_{T \to \infty} P_{i1}^{(T)} = 1$.
\end{proof}

\FloatBarrier

\section{Multi-head extension}
\label{app:multihead}

The finite-depth analysis in \cref{sec:finite-depth} and the infinite-depth
analysis in \cref{sec:infinite-depth} are formulated for a single attention head.
In practice, Transformer layers employ multiple attention heads whose outputs are
aggregated.
This appendix justifies the uniform head-averaging model
$A^{(t)} := \tfrac{1}{H}\sum_{h=1}^H A^{(t,h)}$ introduced in
\cref{sec:infinite-depth} and shows that all single-head results extend to it.

Let $A^{(t,h)}$ denote the causal row-stochastic attention kernel of head
$h\in\{1,\dots,H\}$ at layer $t$, interpreted as the dataset-averaged kernel in
the sense of \cref{sec:setup}.
We use uniform weights for simplicity.
More general convex combinations that
account for head-specific output projections could also be justified, but are not
needed for the theoretical results or the experiments presented here.

Since prefix masses and residual-aware transitions depend
linearly on the underlying attention kernel, all finite- and infinite-depth
results established for a single head apply verbatim to $A^{(t)}$.
In particular, convexity preserves causality, row-stochasticity, and
Assumption~\ref{ass:stoch-monotone-A}.
For the infinite-depth analysis, we assume that
Assumptions~\ref{ass:bounded_qk}-\ref{ass:bounded_b} hold uniformly across heads.

\FloatBarrier

\FloatBarrier

\newpage
\begin{figure*}[t]
  \centering
  \noindent\makebox[\textwidth][c]{\texttt{bloom-7b}}\\[0.75ex]
  \begin{subfigure}[t]{0.24\textwidth}
    \centering
    \includegraphics[width=\textwidth]{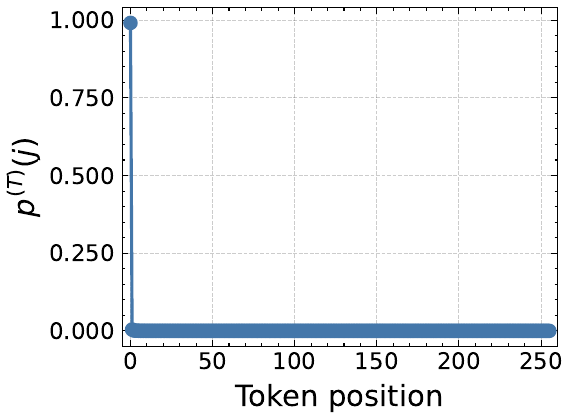}
  \end{subfigure}
  \begin{subfigure}[t]{0.24\textwidth}
    \centering
    \includegraphics[width=\textwidth]{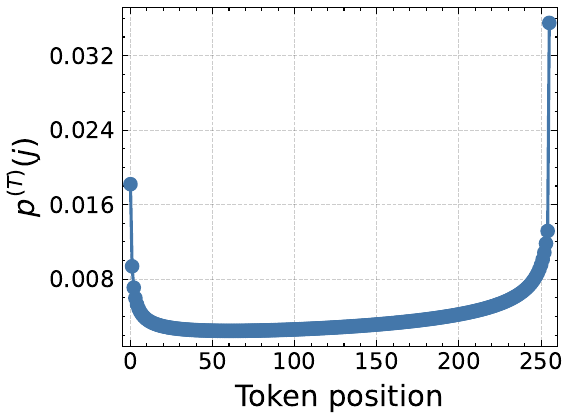}
  \end{subfigure}
  \begin{subfigure}[t]{0.24\textwidth}
    \centering
    \includegraphics[width=\textwidth]{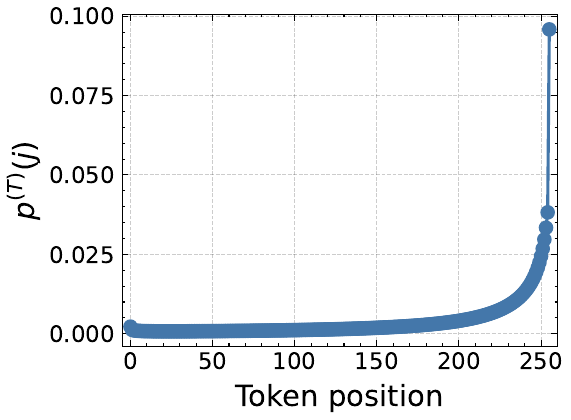}
  \end{subfigure}
  \begin{subfigure}[t]{0.24\textwidth}
    \centering
    \includegraphics[width=\textwidth]{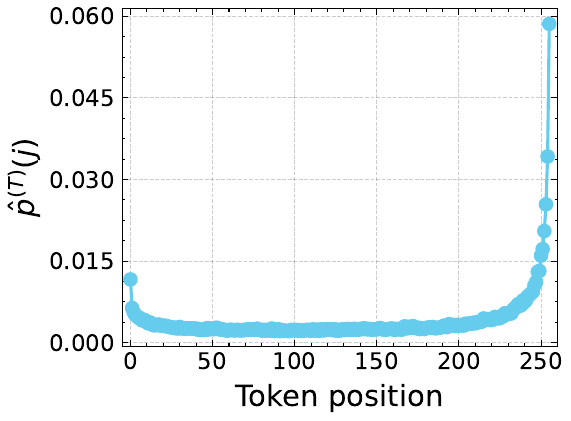}
  \end{subfigure}
  \noindent\makebox[\textwidth][c]{\texttt{mpt-30b}}\\[0.75ex]
  \begin{subfigure}[t]{0.24\textwidth}
    \centering
    \includegraphics[width=\textwidth]{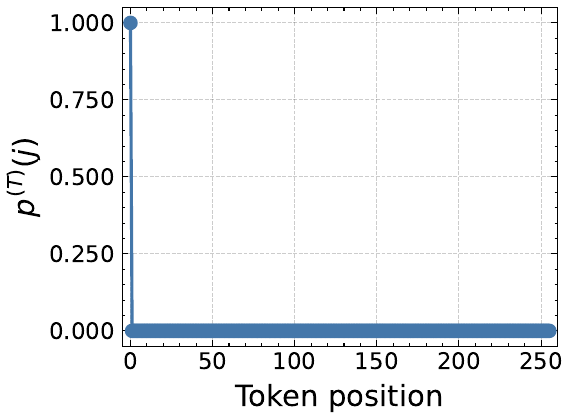}
  \end{subfigure}
  \begin{subfigure}[t]{0.24\textwidth}
    \centering
    \includegraphics[width=\textwidth]{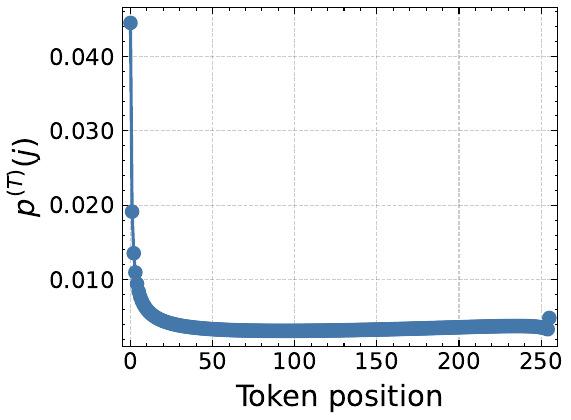}
  \end{subfigure}
  \begin{subfigure}[t]{0.24\textwidth}
    \centering
    \includegraphics[width=\textwidth]{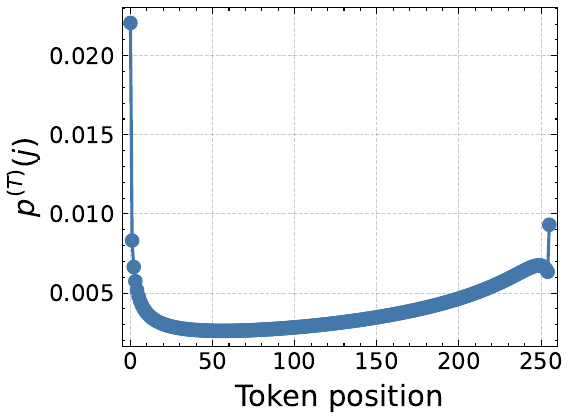}
  \end{subfigure}
  \begin{subfigure}[t]{0.24\textwidth}
    \centering
    \includegraphics[width=\textwidth]{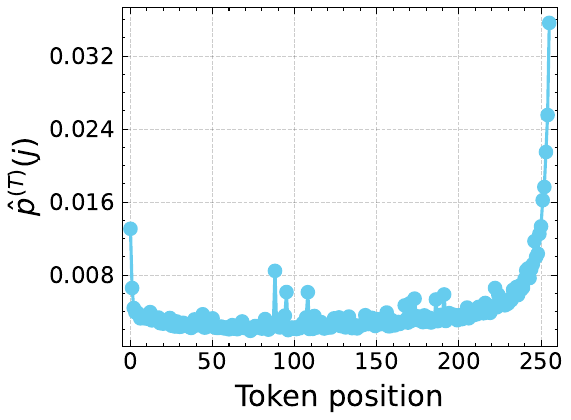}
  \end{subfigure}
  \noindent\makebox[\textwidth][c]{\texttt{falcon-rw-7b}}\\[0.75ex]
  \begin{subfigure}[t]{0.24\textwidth}
    \centering
    \includegraphics[width=\textwidth]{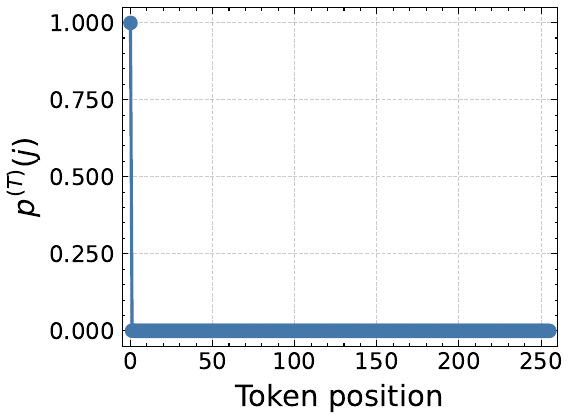}
    \caption{Attention-only rollout ($\lambda_t~=~1$)}
  \end{subfigure}
  \begin{subfigure}[t]{0.24\textwidth}
    \centering
    \includegraphics[width=\textwidth]{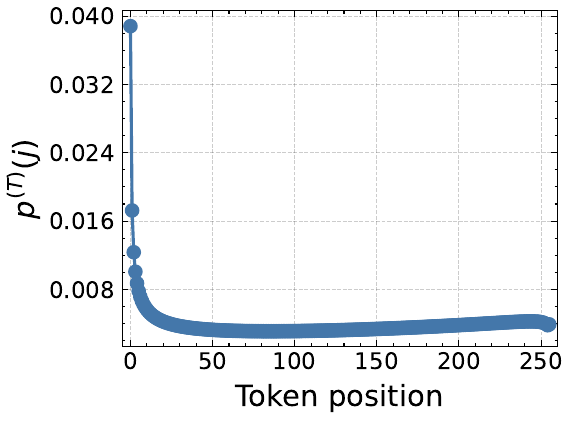}
    \caption{Residual-aware rollout}
  \end{subfigure}
  \begin{subfigure}[t]{0.24\textwidth}
    \centering
    \includegraphics[width=\textwidth]{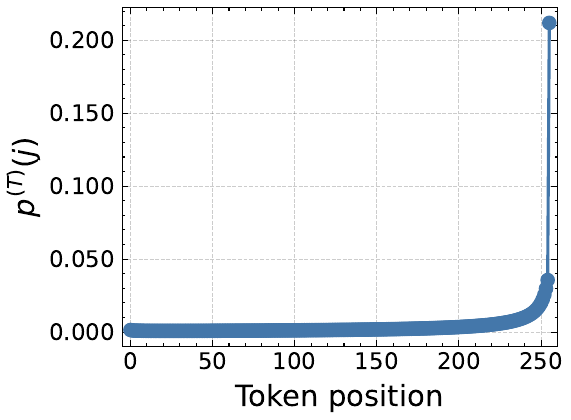}
    \caption{Residual-aware rollout with constant content}
  \end{subfigure}
  \begin{subfigure}[t]{0.24\textwidth}
    \centering
    \includegraphics[width=\textwidth]{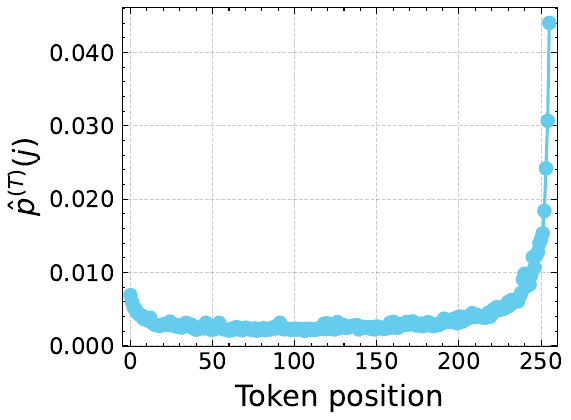}
    \caption{Measured input token influence}
  \end{subfigure}
  \caption{Final-token influence distribution $p^{(T)}$ at sequence length $n=256$ for the 30-layer \texttt{bloom-7b} (top row), the 48-layer \texttt{mpt-30b} (middle row), and the 36-layer \texttt{falcon-rw-7b} (bottom row), computed using our residual-aware rollout theory (\cref{sec:setup,sec:finite-depth}) and compared to the empirically estimated final-token input token influence $\hat{p}^{(T)}$~(\cref{subsec:input-token-influence}). See \cref{sec:experiments} for details.}
  \label{fig:rollout_comparison_bloom_7_mpt_30_falcon_7}
\end{figure*}

\section{Additional rollout and influence analysis}
\label{app:rollout}

This appendix provides additional qualitative and quantitative results for the
controlled rollout experiments discussed in \cref{sec:experiments}.
We report final-token rollout distributions $p^{(T)}$ computed using the
residual-aware rollout theory defined in \cref{sec:setup}, together with the
corresponding empirically measured input token influence distributions
$\hat{p}^{(T)}$ (\cref{subsec:input-token-influence}), for a range of
ALiBi-based Transformer models.

\subsection{Qualitative comparison of rollout and measured influence}
\label{app:qualitative-rollout}

\Cref{fig:rollout_comparison_bloom_7_mpt_30_falcon_7} shows additional rollout and
measurement curves across models.
Across all examined cases, the empirically measured influence distributions
exhibit a characteristic U-shaped profile, reflecting an architectural
positional prior induced by causal masking and residual connections.
The residual-aware rollout without content recovers this U-shaped structure,
while incorporating constant-plus-diagonal content modulates the profile by
shifting mass toward later positions without eliminating the underlying shape.
In contrast, the attention-only rollout collapses to the first token and fails
to reproduce the empirical pattern.

\subsection{Measured input token influence at longer context length}
\label{app:iti-2048}

\Cref{fig:iti_all_models_2048} extends the qualitative comparison to a longer
context length of $n=2048$, showing the empirically measured input token
influence $\hat{p}^{(T)}$ for all five ALiBi-based models considered in this
work, across the FineWeb-Edu, DCLM-Baseline, and Wikipedia datasets.
The characteristic U-shaped profile persists at this longer sequence length and
across all three datasets, indicating that the architectural positional prior
is not an artifact of short contexts or of a particular data distribution.

\begin{figure*}[t]
  \centering
  \noindent\makebox[\textwidth][c]{\texttt{FineWeb-Edu}}\\[0.5ex]
  \begin{subfigure}[t]{0.19\textwidth}
    \centering
    \includegraphics[width=\textwidth]{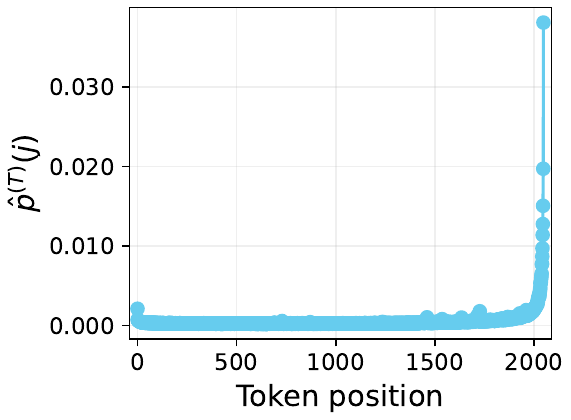}
  \end{subfigure}
  \begin{subfigure}[t]{0.19\textwidth}
    \centering
    \includegraphics[width=\textwidth]{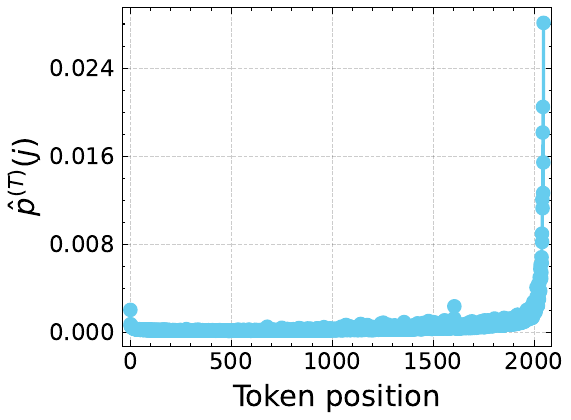}
  \end{subfigure}
  \begin{subfigure}[t]{0.19\textwidth}
    \centering
    \includegraphics[width=\textwidth]{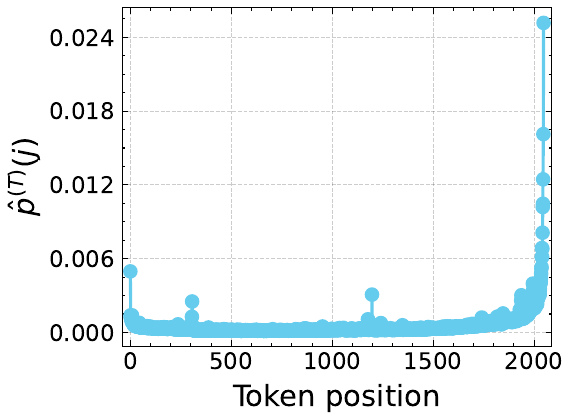}
  \end{subfigure}
  \begin{subfigure}[t]{0.19\textwidth}
    \centering
    \includegraphics[width=\textwidth]{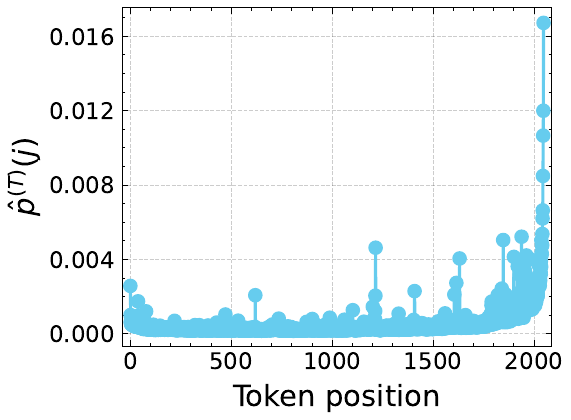}
  \end{subfigure}
  \begin{subfigure}[t]{0.19\textwidth}
    \centering
    \includegraphics[width=\textwidth]{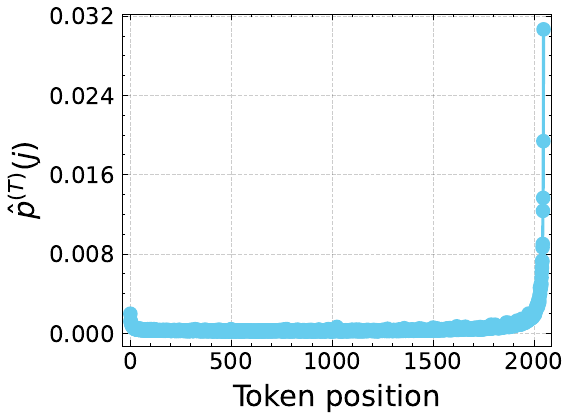}
  \end{subfigure}
  \noindent\makebox[\textwidth][c]{\texttt{DCLM-Baseline}}\\[0.5ex]
  \begin{subfigure}[t]{0.19\textwidth}
    \centering
    \includegraphics[width=\textwidth]{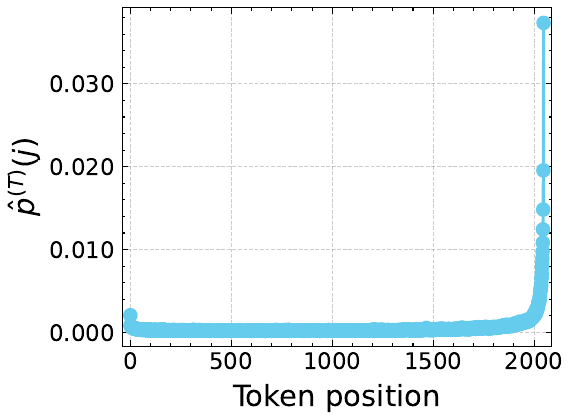}
  \end{subfigure}
  \begin{subfigure}[t]{0.19\textwidth}
    \centering
    \includegraphics[width=\textwidth]{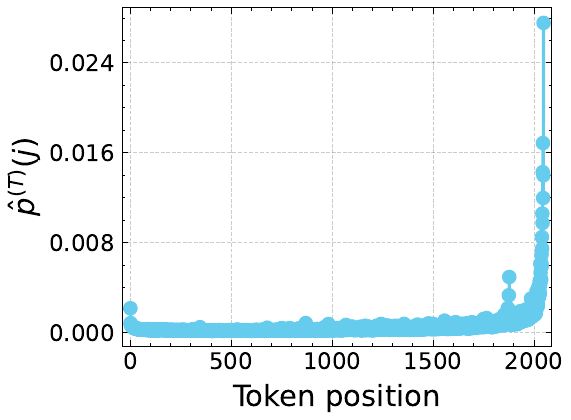}
  \end{subfigure}
  \begin{subfigure}[t]{0.19\textwidth}
    \centering
    \includegraphics[width=\textwidth]{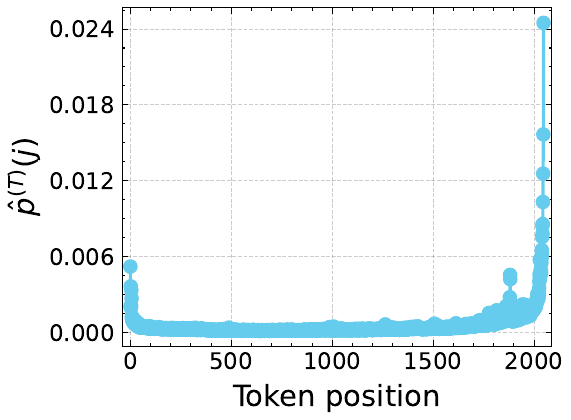}
  \end{subfigure}
  \begin{subfigure}[t]{0.19\textwidth}
    \centering
    \includegraphics[width=\textwidth]{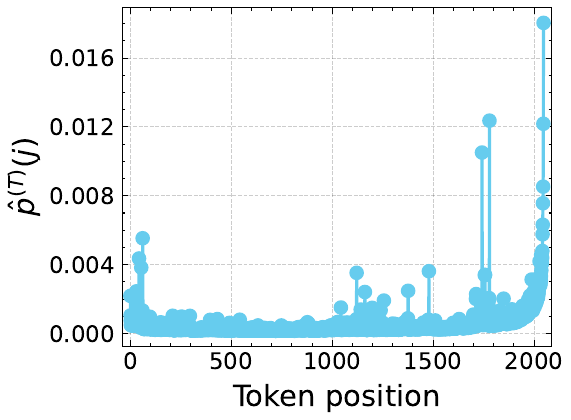}
  \end{subfigure}
  \begin{subfigure}[t]{0.19\textwidth}
    \centering
    \includegraphics[width=\textwidth]{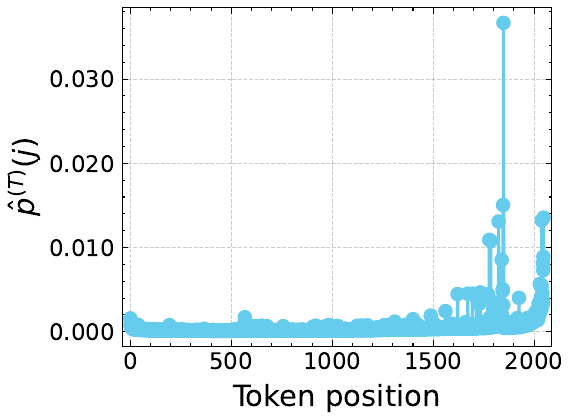}
  \end{subfigure}
  \noindent\makebox[\textwidth][c]{\texttt{Wikipedia}}\\[0.5ex]
  \begin{subfigure}[t]{0.19\textwidth}
    \centering
    \includegraphics[width=\textwidth]{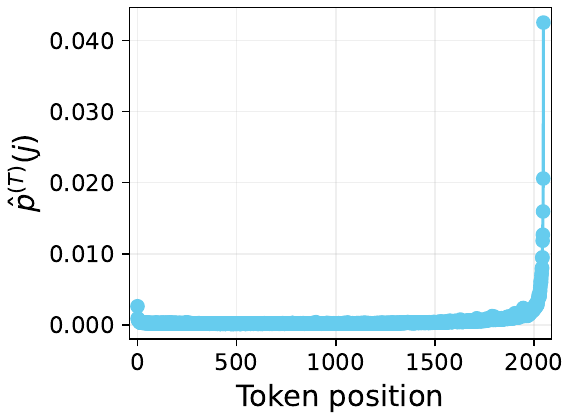}
    \caption{\texttt{bloom-176b}}
  \end{subfigure}
  \begin{subfigure}[t]{0.19\textwidth}
    \centering
    \includegraphics[width=\textwidth]{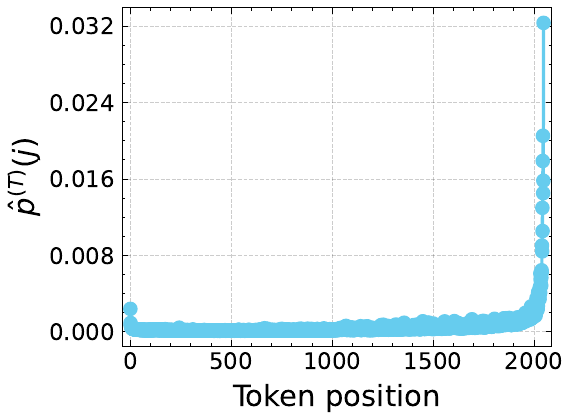}
    \caption{\texttt{bloom-7b}}
  \end{subfigure}
  \begin{subfigure}[t]{0.19\textwidth}
    \centering
    \includegraphics[width=\textwidth]{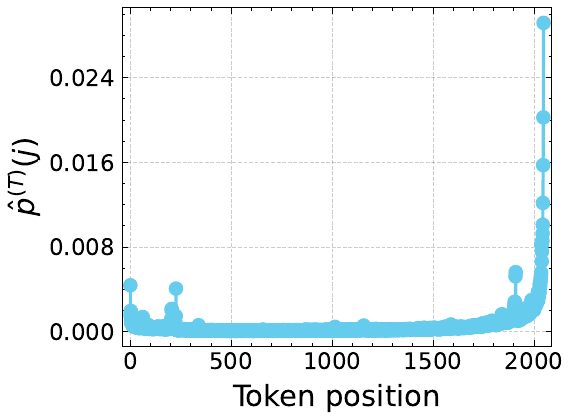}
    \caption{\texttt{mpt-7b}}
  \end{subfigure}
  \begin{subfigure}[t]{0.19\textwidth}
    \centering
    \includegraphics[width=\textwidth]{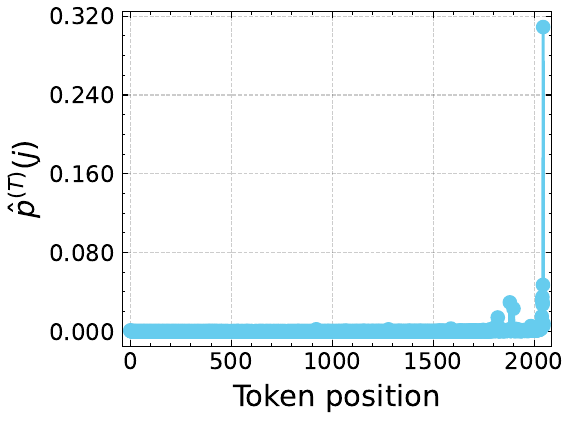}
    \caption{\texttt{mpt-30b}}
  \end{subfigure}
  \begin{subfigure}[t]{0.19\textwidth}
    \centering
    \includegraphics[width=\textwidth]{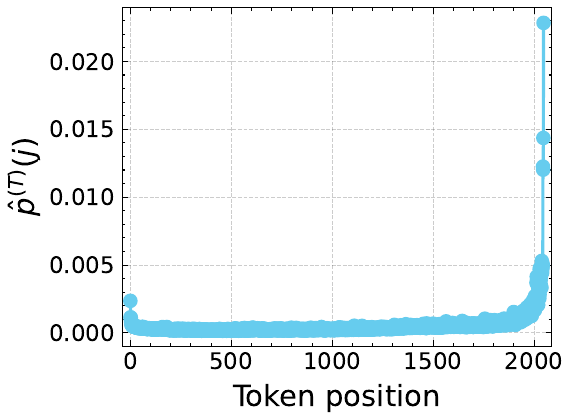}
    \caption{\texttt{falcon-rw-7b}}
  \end{subfigure}
  \caption{Empirically measured input token influence $\hat{p}^{(T)}$ at sequence length $n=2048$ across the five ALiBi-based Transformer models (columns) and three datasets (rows: FineWeb-Edu, DCLM-Baseline, Wikipedia). The characteristic U-shaped profile persists across both longer context lengths and data distributions.}
  \label{fig:iti_all_models_2048}
\end{figure*}

\subsection{Quantitative agreement between theory and measurement}
\label{app:quantitative-metrics}

To quantify the agreement between the predicted influence distributions
$p^{(T)}$ and the empirically measured input token influence
$\hat{p}^{(T)}$, we report normalized
$1$-Wasserstein distance on token positions, using the ground metric
$d(i,j)=|i-j|/(n-1)$
(\cref{tab:wasserstein_per_dataset}).
Across datasets, the Wasserstein distances are consistently small, demonstrating that the predicted influence distributions from the rollout theory closely match the empirical distributions in shape.
This agreement is particularly pronounced for larger models, such as
\texttt{mpt-30b} and \texttt{bloom-176b}, suggesting that the simplifying
assumptions underlying the theory become increasingly accurate with depth.

Taken together, the qualitative and quantitative results show that Transformer
architectures induce a strong U-shaped positional prior even in the absence of
content. At $n=256$, incorporating the constant-plus-diagonal content
abstraction can further refine the geometric fit, particularly in larger
models. At $n=2048$ (\cref{tab:wasserstein_per_dataset_2048}), the
residual-aware rollout without content already provides the closest match
across most model-dataset combinations.
These findings support the view that the residual-aware rollout theory provides a
quantitatively faithful approximation of influence propagation in pretrained
Transformer models.

\begin{table*}[t]
\centering
\small
\caption{
Wasserstein distances at $n=256$ (↓ lower is better) between controlled rollout variants and empirically measured input token influence, reported separately per dataset.
Column a) corresponds to attention-only, column b) to residual-aware, and column c) to residual-aware with constant content rollouts.
}
\setlength{\tabcolsep}{2.5pt}
\begin{tabular}{lccccccccc}
\toprule
\multirow{2}{*}{{Model}} &
\multicolumn{3}{c}{{FineWeb-Edu}} &
\multicolumn{3}{c}{{DCLM-Baseline}} &
\multicolumn{3}{c}{{Wikipedia}} \\
\cmidrule(lr){2-4} \cmidrule(lr){5-7} \cmidrule(lr){8-10}
&\makecell{(a)}
&\makecell{(b)}
&\makecell{(c)}
&\makecell{(a)}
&\makecell{(b)}
&\makecell{(c)}
&\makecell{(a)}
&\makecell{(b)}
&\makecell{(c)}\\
\midrule
\texttt{falcon-rw-7b} & $0.63$ & 0.19 & \textbf{0.16}
                      & $0.66$ & 0.19 & \textbf{0.15}
                      & $0.68$ & 0.21 & \textbf{0.13} \\
\texttt{mpt-7b}      & $0.62$ & \textbf{0.09} & 0.10
                     & $0.59$ & \textbf{0.06} & 0.12
                     & $0.66$ & 0.13 & \textbf{0.06} \\
\texttt{mpt-30b}     & $0.61$ & 0.16 & \textbf{0.05}
                     & $0.58$ & 0.14 & \textbf{0.03}
                     & $0.65$ & 0.20 & \textbf{0.09} \\
\texttt{bloom-7b}    & $0.63$ & \textbf{0.04} & 0.18
                     & $0.61$ & \textbf{0.03} & 0.18
                     & $0.67$ & \textbf{0.08} & 0.14 \\
\texttt{bloom-176b}  & $0.65$ & 0.08 & \textbf{0.01}
                     & $0.63$ & 0.06 & \textbf{0.02}
                     & $0.69$ & 0.11 & \textbf{0.05} \\
\bottomrule
\end{tabular}
\label{tab:wasserstein_per_dataset}
\end{table*}

\begin{table*}[t]
\centering
\small
\caption{
Wasserstein distances at $n=2048$ (↓ lower is better) between controlled rollout variants and empirically measured input token influence, reported separately per dataset.
Column a) corresponds to attention-only, column b) to residual-aware, and column c) to residual-aware with constant content rollouts.
}
\setlength{\tabcolsep}{2.5pt}
\begin{tabular}{lccccccccc}
\toprule
\multirow{2}{*}{{Model}} &
\multicolumn{3}{c}{{FineWeb-Edu}} &
\multicolumn{3}{c}{{DCLM-Baseline}} &
\multicolumn{3}{c}{{Wikipedia}} \\
\cmidrule(lr){2-4} \cmidrule(lr){5-7} \cmidrule(lr){8-10}
&\makecell{(a)}
&\makecell{(b)}
&\makecell{(c)}
&\makecell{(a)}
&\makecell{(b)}
&\makecell{(c)}
&\makecell{(a)}
&\makecell{(b)}
&\makecell{(c)}\\
\midrule
\texttt{falcon-rw-7b} & $0.45$ & \textbf{0.20} & 0.27
                      & $0.50$ & \textbf{0.15} & 0.22
                      & $0.46$ & \textbf{0.20} & 0.27 \\
\texttt{mpt-7b}       & $0.42$ & \textbf{0.20} & 0.24
                      & $0.43$ & \textbf{0.19} & 0.24
                      & $0.46$ & \textbf{0.16} & 0.19 \\
\texttt{mpt-30b}      & $0.60$ & \textbf{0.19} & 0.22
                      & $0.57$ & \textbf{0.22} & 0.25
                      & $0.82$ & 0.07 & \textbf{0.06} \\
\texttt{bloom-7b}     & $0.43$ & \textbf{0.17} & 0.20
                      & $0.42$ & \textbf{0.18} & 0.21
                      & $0.45$ & \textbf{0.15} & 0.18 \\
\texttt{bloom-176b}   & $0.71$ & \textbf{0.20} & 0.20
                      & $0.70$ & \textbf{0.20} & 0.20
                      & $0.74$ & \textbf{0.16} & 0.17 \\

\bottomrule
\end{tabular}
\label{tab:wasserstein_per_dataset_2048}
\end{table*}

\FloatBarrier

\section{Residual mixing schedules}
\label{app:lambda}

We report measurements (\cref{fig:lambdas}) of the depth-wise effective residual mixing schedule $\{\lambda_t\}$, defined in \cref{subsec:lambda-experiments}, on FineWeb-Edu, DCLM-Baseline, and Wikipedia. These estimates serve as key empirical inputs to our residual-aware rollout analysis (\cref{sec:setup,sec:experiments}).

\begin{figure}[htbp]
  \centering
  \begin{subfigure}{0.7\textwidth}
    \centering
    \includegraphics[width=\linewidth]{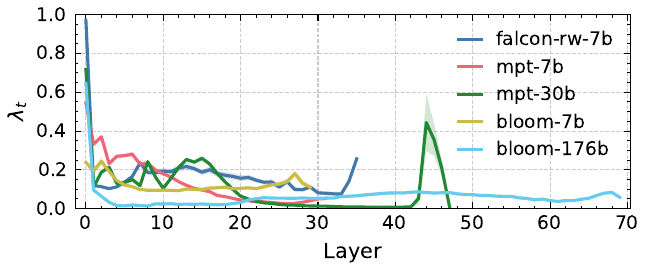}
    \caption{FineWeb-Edu}
    \label{fig:lambda_fineweb}
  \end{subfigure}

  \vspace{0.5em}

  \begin{subfigure}{0.49\textwidth}
    \centering
    \includegraphics[width=\linewidth]{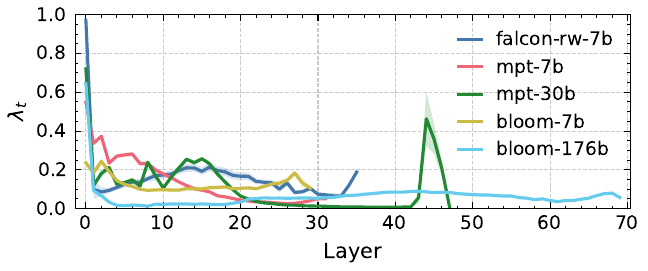}
    \caption{DCLM-Baseline}
    \label{fig:lambda_dclm}
  \end{subfigure}\hfill
  \begin{subfigure}{0.49\textwidth}
    \centering
    \includegraphics[width=\linewidth]{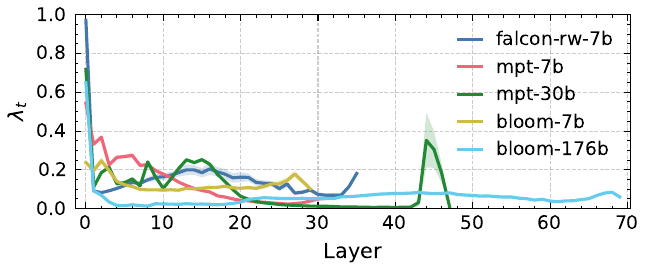}
    \caption{Wikipedia}
    \label{fig:lambda_wiki}
  \end{subfigure}

  \caption{Depth-wise effective residual mixing coefficient $\lambda_t$ (defined in \cref{subsec:lambda-experiments}) across three datasets. Mean $\pm$ 95\% confidence interval over 1{,}000 samples at sequence length $n=2048$. $\lambda_t$ quantifies the fraction of each layer's attention update relative to the sum of residual stream and attention contributions.}
  \label{fig:lambdas}
\end{figure}

\FloatBarrier

\FloatBarrier

\section{Content score heatmaps and similarity statistics}
\label{appendix:heatmaps}

This appendix provides representative heatmaps (\cref{fig:heatmaps}) used in \cref{sec:experiments} to validate the content-logit model from Proposition~\ref{prop:diag-content}.
Specifically, for ALiBi models we visualize the mean pre-softmax content scores (i.e., the attention scores before adding the positional bias term and before applying the softmax), averaged over 1{,}000 prompts.
These averages reveal a characteristic constant off-diagonal structure together with a systematic diagonal shift, consistent with the constant-plus-diagonal model in Proposition~\ref{prop:diag-content}.

\begin{figure*}[htbp]
  \centering
  \begin{subfigure}{0.32\textwidth}
    \includegraphics[width=\linewidth]{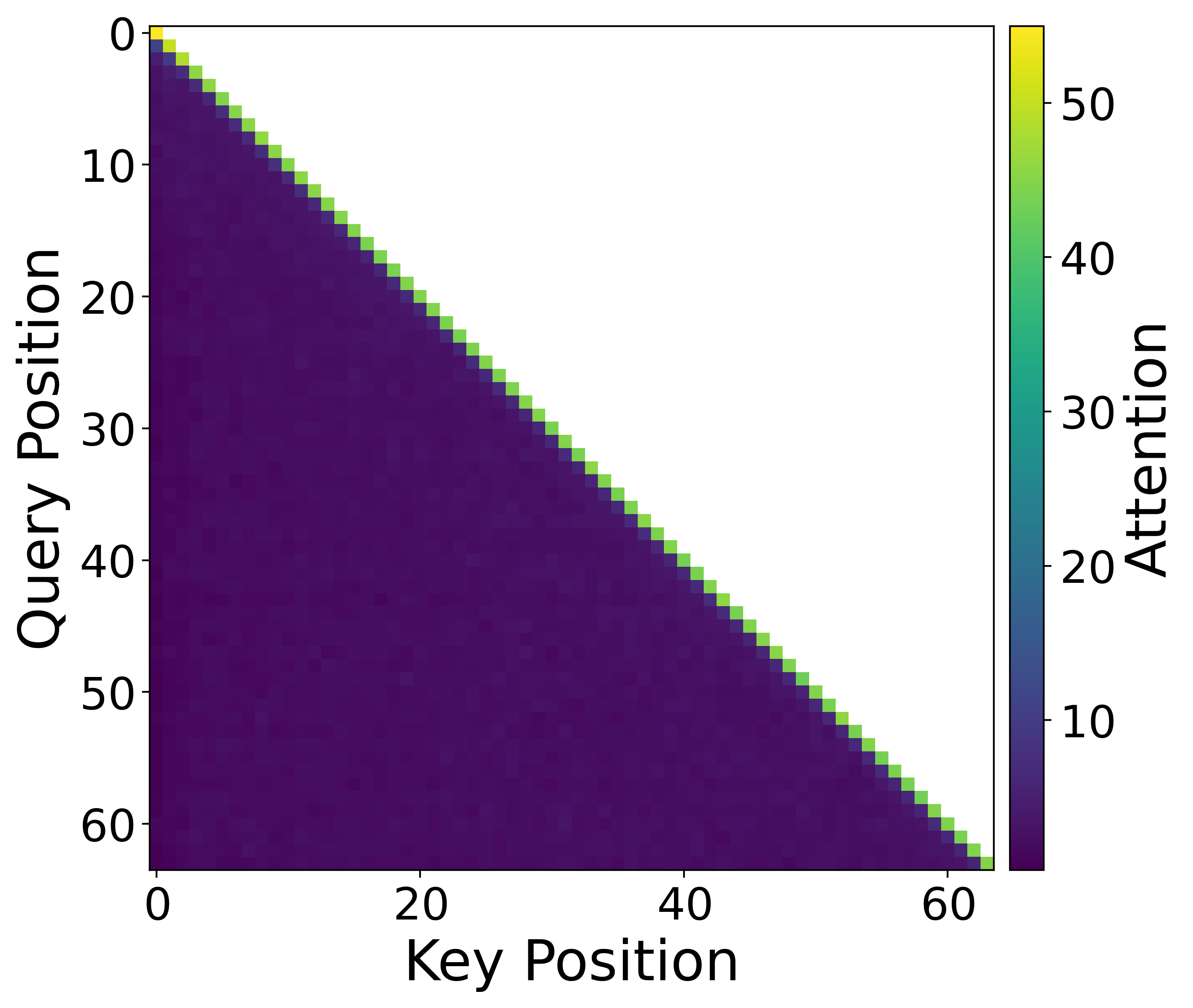}
    \caption{$t=1$, $h=1$, $n=64$}
  \end{subfigure}
  \hfill
  \begin{subfigure}{0.32\textwidth}
    \includegraphics[width=\linewidth]{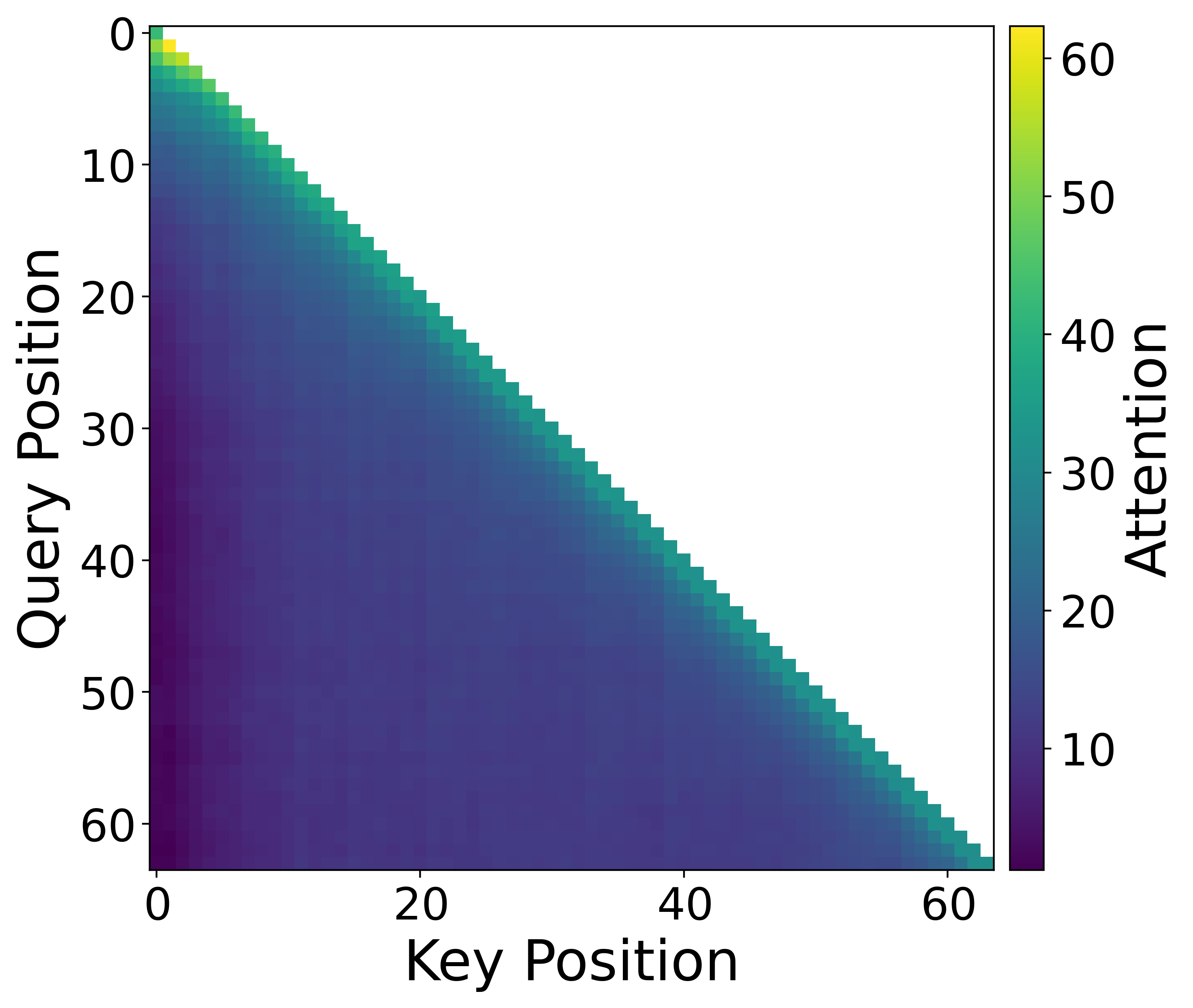}
    \caption{$t=36$, $h=1$, $n=64$}
  \end{subfigure}
  \hfill
  \begin{subfigure}{0.32\textwidth}
    \includegraphics[width=\linewidth]{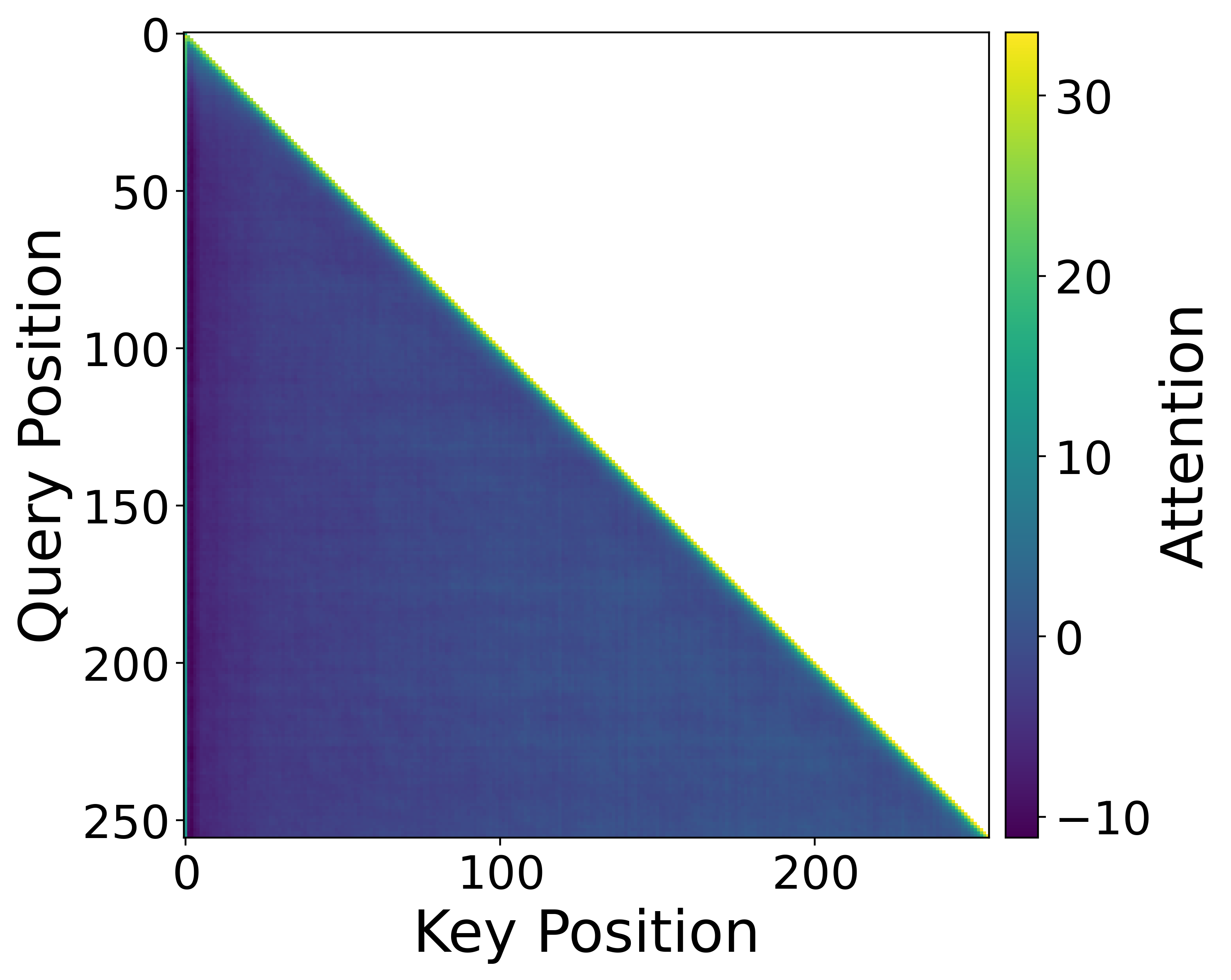}
    \caption{$t=12$, $h=1$, $n=256$}
  \end{subfigure}

  \begin{subfigure}{0.32\textwidth}
    \includegraphics[width=\linewidth]{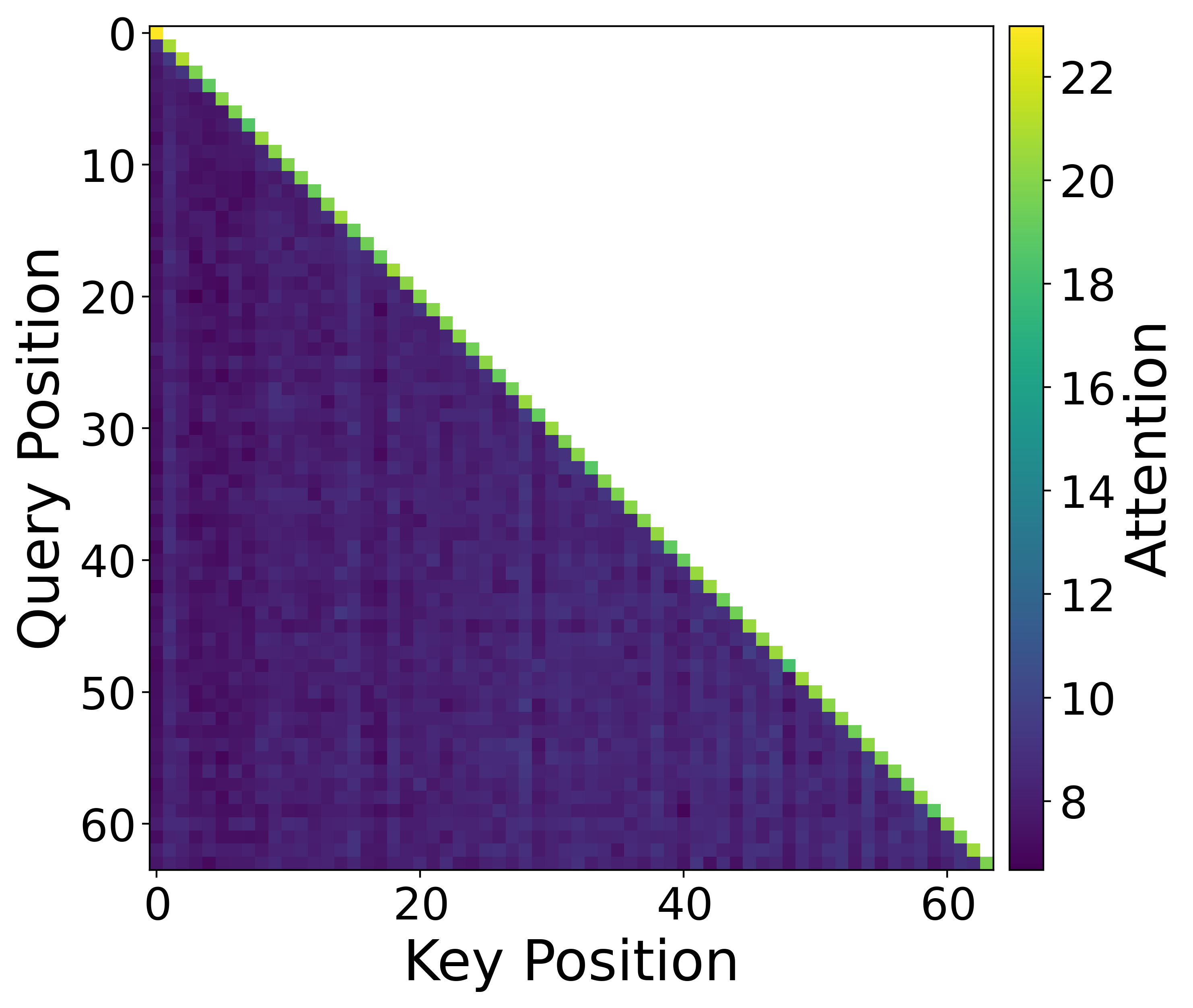}
    \caption{$t=1$, $h=25$, $n=64$}
  \end{subfigure}
  \hfill
  \begin{subfigure}{0.32\textwidth}
    \includegraphics[width=\linewidth]{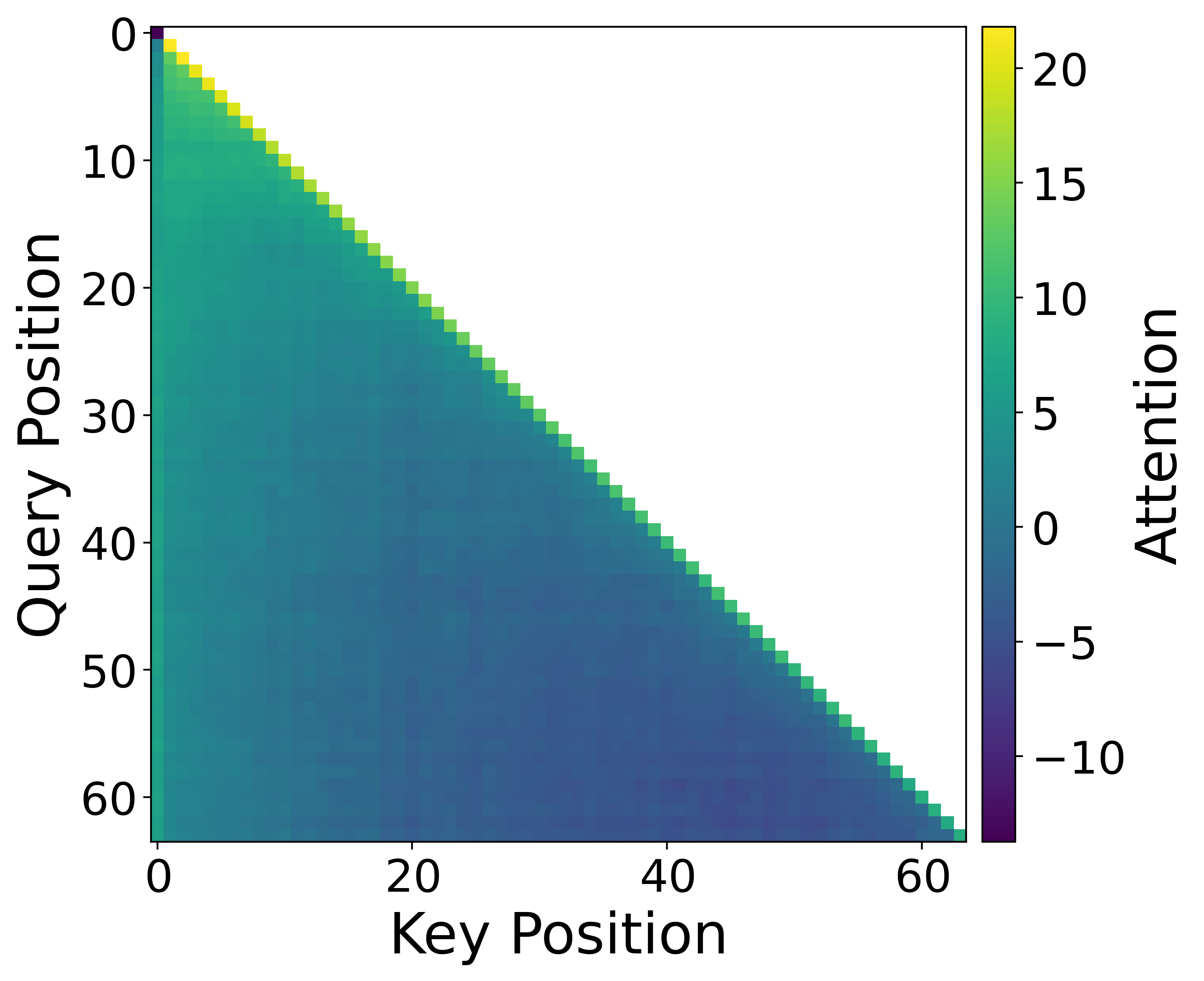}
    \caption{$t=30$, $h=27$, $n=64$}
  \end{subfigure}
  \hfill
  \begin{subfigure}{0.32\textwidth}
    \includegraphics[width=\linewidth]{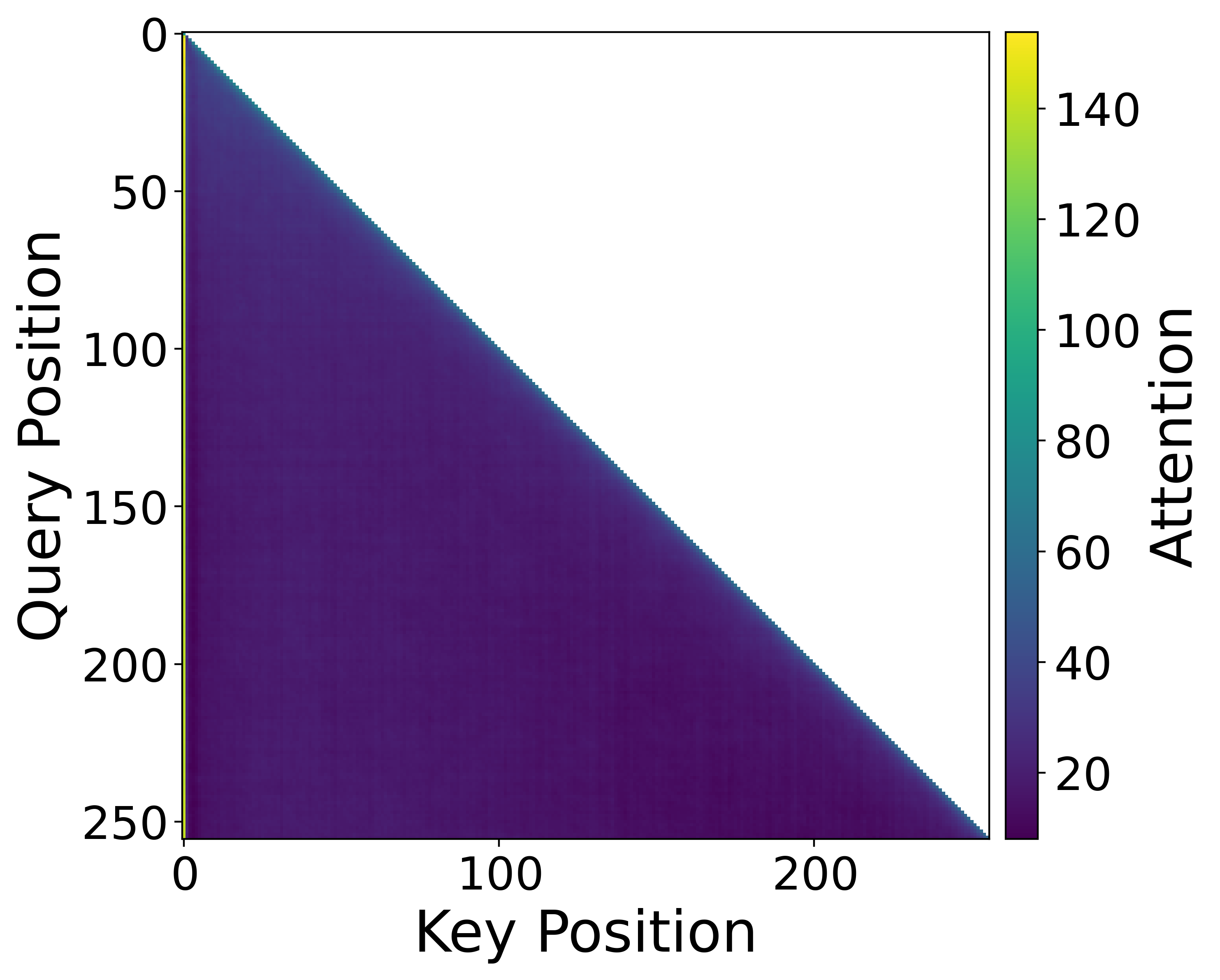}
    \caption{$t=15$, $h=30$, $n=256$}
  \end{subfigure}

  \begin{subfigure}{0.32\textwidth}
    \includegraphics[width=\linewidth]{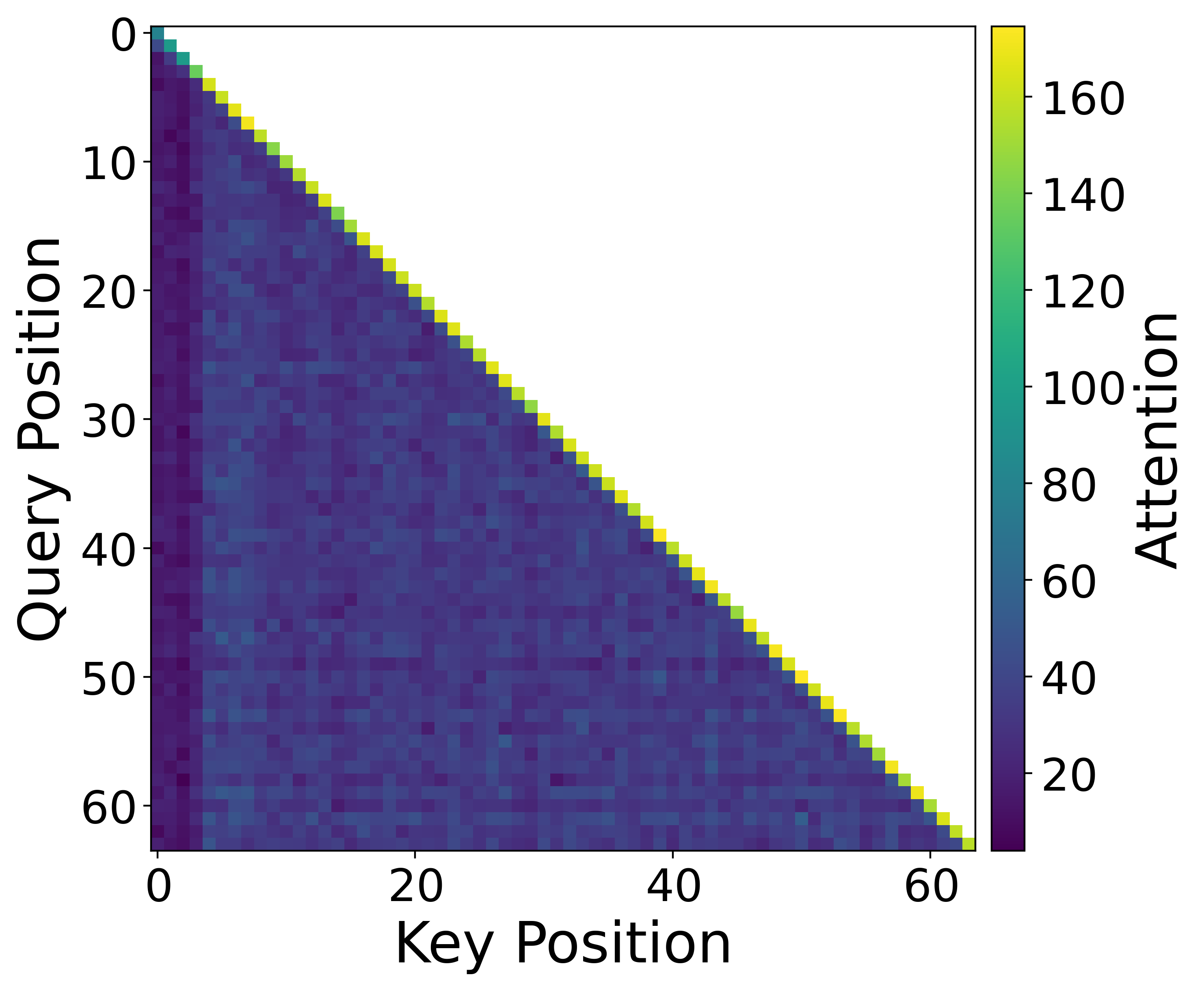}
    \caption{$t=1$, $h=65$, $n=64$}
  \end{subfigure}
  \hfill
  \begin{subfigure}{0.32\textwidth}
    \includegraphics[width=\linewidth]{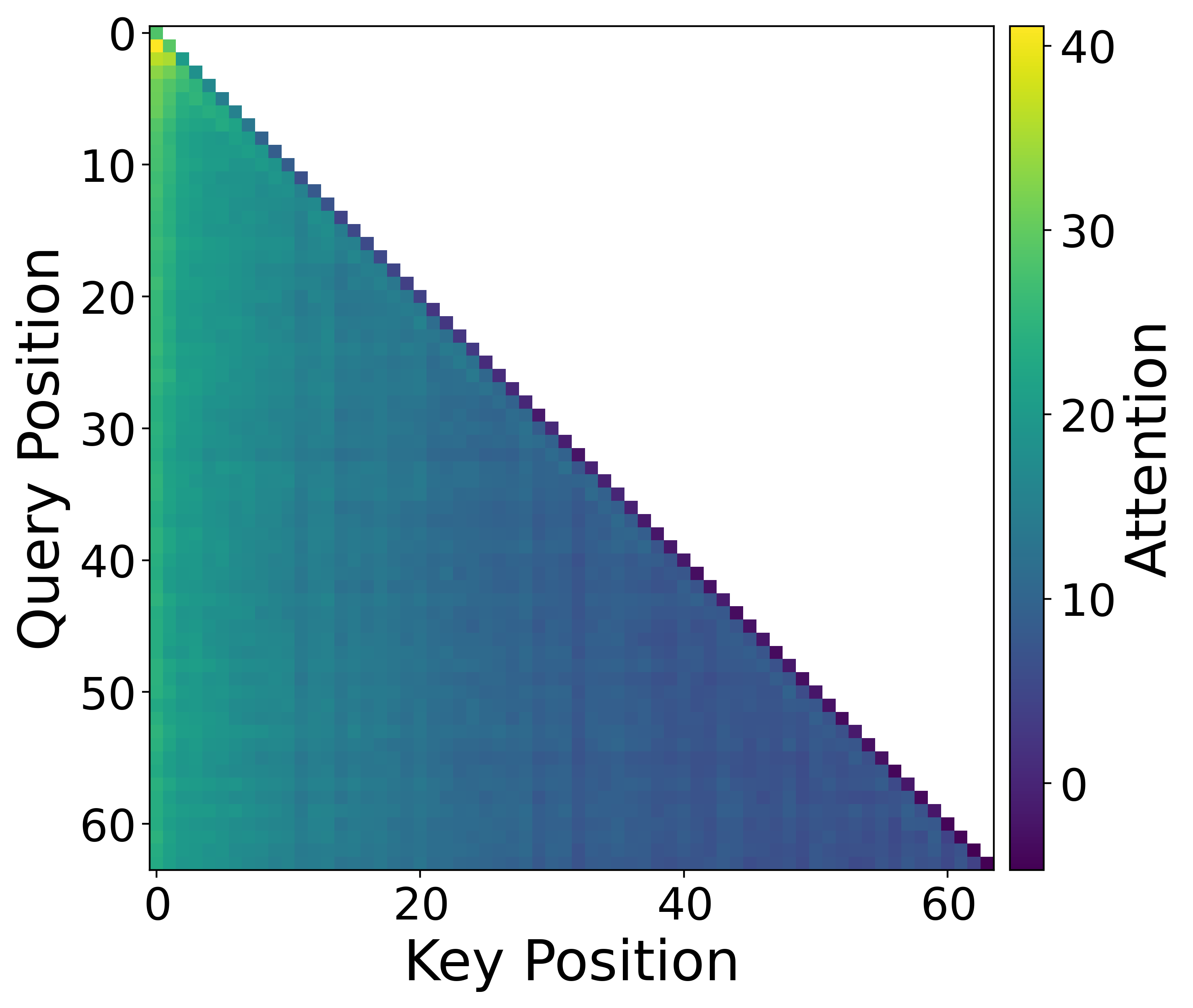}
    \caption{$t=70$, $h=77$, $n=64$}
  \end{subfigure}
  \hfill
  \begin{subfigure}{0.32\textwidth}
    \includegraphics[width=\linewidth]{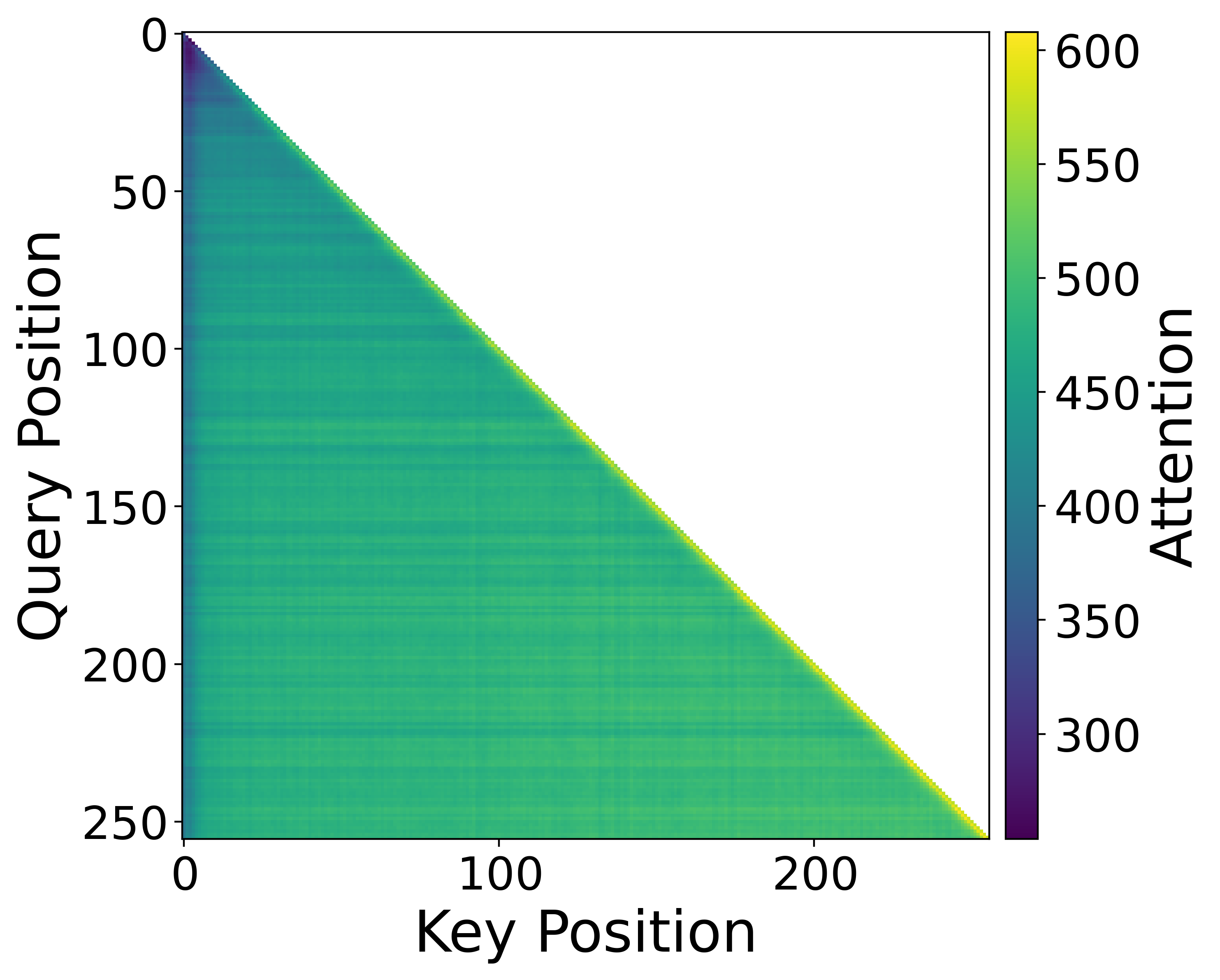}
    \caption{$t=36$, $h=69$, $n=256$}
  \end{subfigure}

  \begin{subfigure}{0.32\textwidth}
    \includegraphics[width=\linewidth]{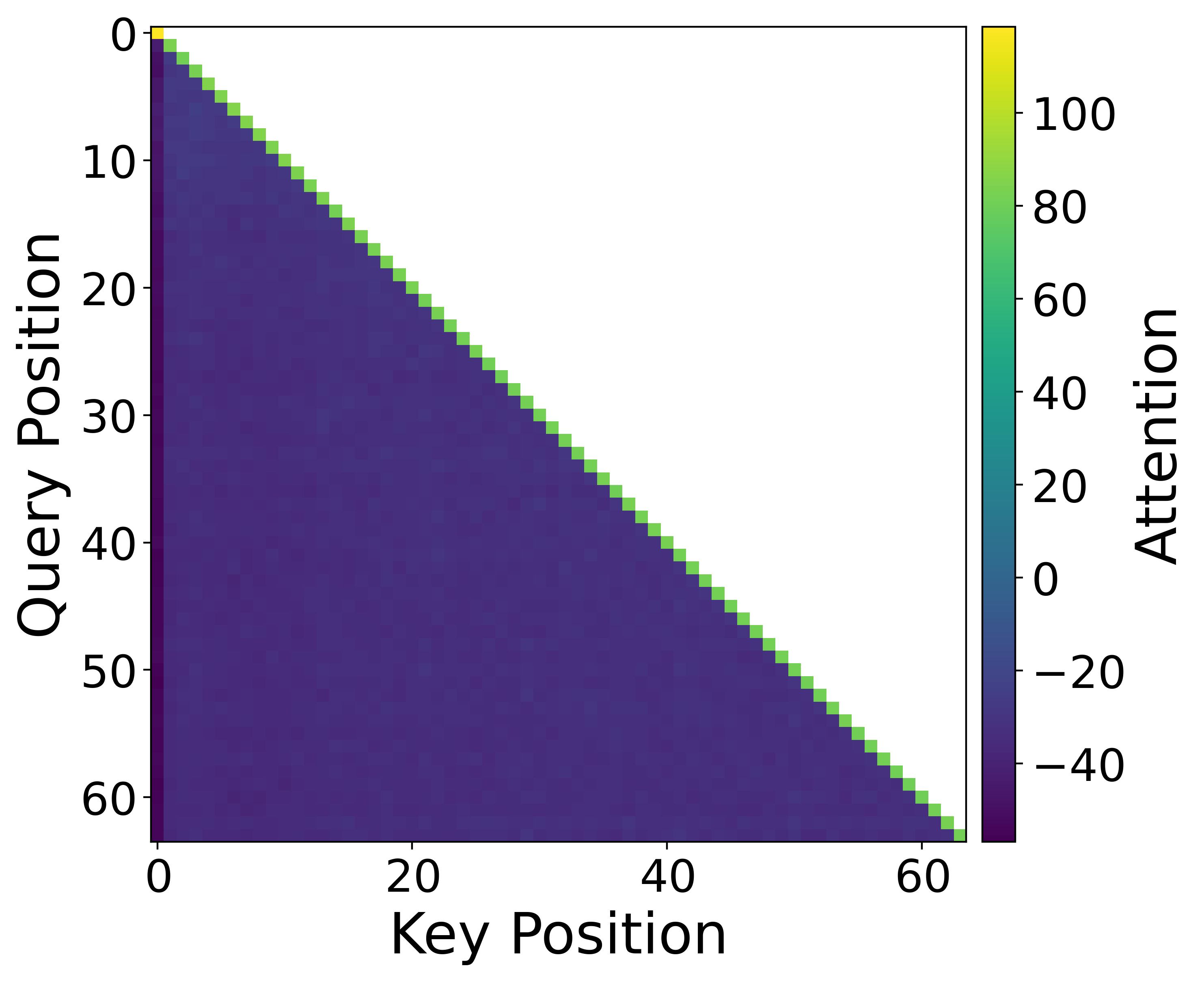}
    \caption{$t=1$, $h=1$, $n=64$}
  \end{subfigure}
  \hfill
  \begin{subfigure}{0.32\textwidth}
    \includegraphics[width=\linewidth]{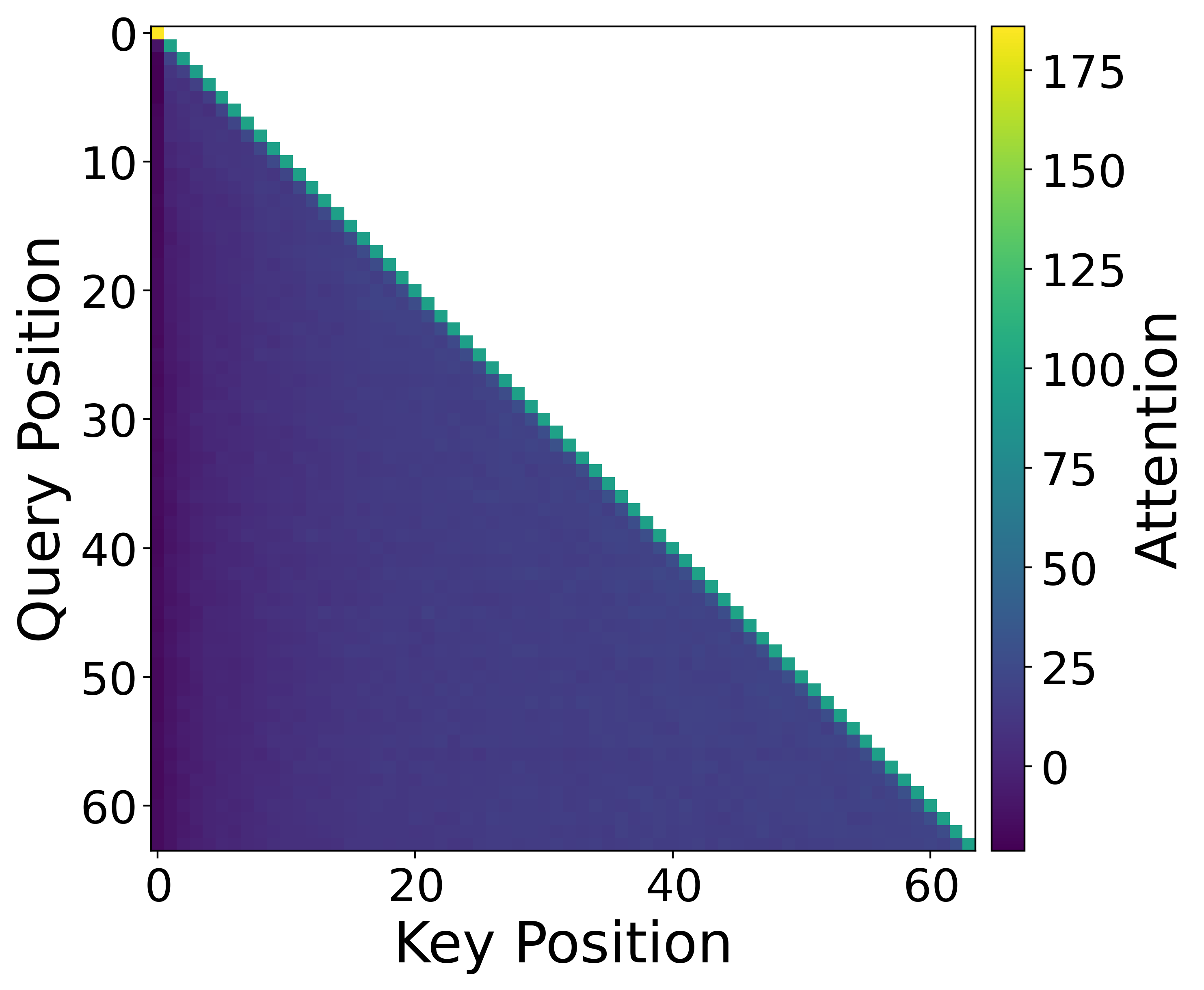}
    \caption{$t=32$, $h=6$, $n=64$}
  \end{subfigure}
  \hfill
  \begin{subfigure}{0.32\textwidth}
    \includegraphics[width=\linewidth]{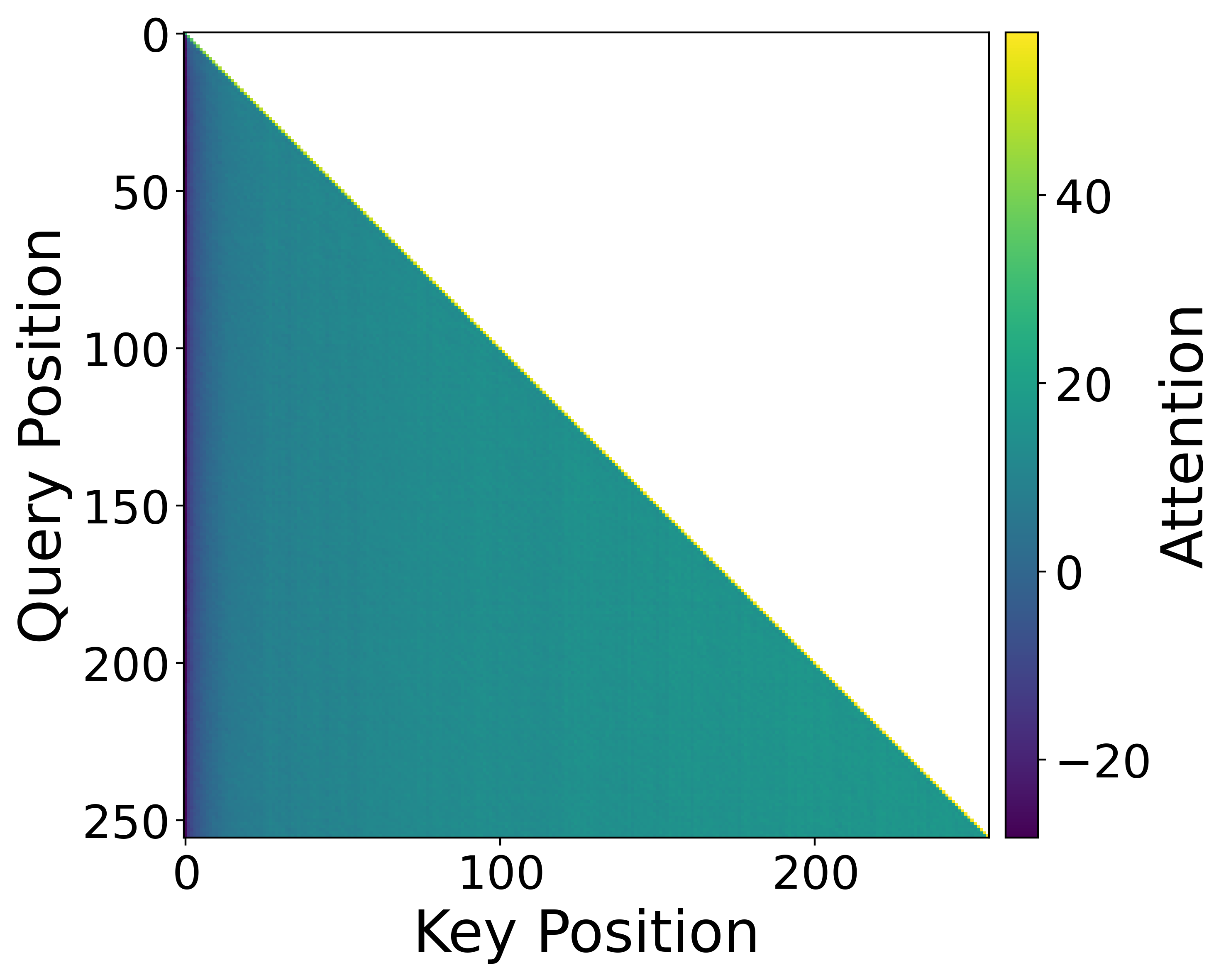}
    \caption{$t=32$, $h=1$, $n=256$}
  \end{subfigure}
  \caption{
    Mean pre-softmax content-score heatmaps for ALiBi-based models.
    Rows correspond to \texttt{falcon-rw-7b}, \texttt{bloom-7b}, \texttt{bloom-176b}, and \texttt{mpt-7b} (top to bottom).
    Each panel shows mean pre-softmax content scores averaged over 1{,}000 FineWeb-Edu prompts for the indicated layer $t$, head $h$, and sequence length $n$.
  }
  \label{fig:heatmaps}
\end{figure*}

\noindent\textbf{Distributional evidence for constant-plus-diagonal structure.}
Visual inspection of \cref{fig:heatmaps} suggests that, after averaging over prompts, content scores are nearly constant within the diagonal and within the (masked) off-diagonal region.
To quantify this effect, we compute within-region homogeneity statistics for each model separately for each layer and head on the mean content-score matrices.
We report model-level averages of these statistics in \cref{tab:heatmaps_similarity_summary}.

\begin{table}[htbp]
  \centering
  \small
  \caption{
    Mean $\pm$ standard deviation of distributional similarity statistics computed from the \emph{mean} pre-softmax content-score matrices at sequence length $n=256$.
    Statistics are computed on the diagonal and strictly lower-triangular entries, respectively.
    High similarity indicates more homogeneous values within a region, consistent with the constant-plus-diagonal structure assumed in the theoretical analysis.
  }
  \begin{tabular}{lccc}
    \toprule
    Model & Within-diag similarity & Within-off-diag similarity \\
    \midrule
    \texttt{falcon-rw-7b}   & $0.53\pm 0.15$ & $0.45\pm 0.16$  \\
    \texttt{mpt-7b}         & $0.58\pm 0.16$ & $0.47\pm 0.20$ \\
    \texttt{mpt-30b}        & $0.75\pm 0.18$ & $0.73\pm 0.17$  \\
    \texttt{bloom-7b}       & $0.54\pm 0.18$ & $0.47\pm 0.20$  \\
    \texttt{bloom-176b}     & $0.86\pm 0.13$ & $0.84\pm 0.14$ \\
    \bottomrule
  \end{tabular}
  \label{tab:heatmaps_similarity_summary}
\end{table}

Specifically, we compute the normalized Shannon entropy separately for the sets of diagonal and off-diagonal entries, which we refer to as the \emph{within-diagonal} and \emph{within-off-diagonal} similarity statistics.
These statistics measure the degree to which entries within each region concentrate around a single value.

Let $p \in \Delta^{B-1}$ denote the empirical histogram over $B$ bins computed from a given set of entries (diagonal or off-diagonal).
We define the normalized Shannon similarity as
\begin{equation*}
\mathrm{Sim}_{\mathrm{Sh}}(p)
\;=\;
1 - \frac{H(p)}{\log B},
\qquad
H(p) = -\sum_{i=1}^{B} p_i \log p_i.
\end{equation*}
Thus $\mathrm{Sim}_{\mathrm{Sh}}(p)\in[0,1]$, with larger values indicating more homogeneous entries within the region.

High within-region similarity for both diagonal and off-diagonal entries supports the approximation of content logits by a constant background plus a diagonal offset, as assumed in Proposition~\ref{prop:diag-content}.

Overall, the within-region similarities are moderate to high across all models, indicating that both diagonal and off-diagonal entries are comparatively homogeneous in the mean content-score matrices. This effect is most pronounced for \texttt{bloom-176b}, where both similarity metrics approach unity, suggesting that the constant-plus-diagonal approximation is particularly accurate for larger models. Taken together, these statistics confirm that (i) diagonal entries cluster around a common level and (ii) off-diagonal entries cluster around another common level, providing quantitative support for the theoretical model assumed in Proposition~\ref{prop:diag-content}.

\FloatBarrier

\newpage

\section{Licenses and usage notes}
\label{app:licenses}

\Cref{tab:licenses} summarizes license names, links, and citations for the models and datasets used in this paper.

\begin{table}[htbp]
\small
\centering
\caption{License summary for models and datasets used in this work.}
\label{tab:licenses}
\setlength{\tabcolsep}{6pt}
\begin{tabular}{l m{0.16\textwidth} m{0.28\textwidth} l}
\toprule
Resource & License & Link & Notes / Citation \\
\midrule
Falcon & Apache 2.0 & \url{https://huggingface.co/tiiuae/falcon-rw-7b} & \citep{penedo2306} \\
MPT & Apache 2.0 & \url{https://huggingface.co/eluzhnica/mpt-30b-peft-compatible} & \citep{mosaicml2305,mosaicml2306} \\
BLOOM & BigScience RAIL License v1.0 & \url{https://huggingface.co/bigscience/bloom} & \citep{bigscience2205} \\
\midrule
FineWeb-Edu & ODC-By-1.0 & \url{https://huggingface.co/datasets/HuggingFaceFW/fineweb-edu} & \citep{dataset-finewebedu} \\
DCLM-Baseline & CC-by-4.0 & \url{https://huggingface.co/datasets/mlfoundations/dclm-baseline-1.0} & \citep{dataset-dclm} \\
Wikipedia & CC-by-SA-3.0 & \url{https://huggingface.co/datasets/wikimedia/wikipedia} & \citep{dataset-wikipedia} \\
\bottomrule
\end{tabular}
\end{table}



\end{document}